





\documentclass[sigconf]{aamas}


\usepackage{balance} 

\usepackage{bm}
\usepackage{amsmath,amsfonts}
\usepackage{paralist}
\usepackage{wrapfig}
\theoremstyle{definition}
\newtheorem{definition}{Definition}

\usepackage{algorithm}
\usepackage{algorithmic}
\usepackage{hyperref}
\usepackage{float}



\makeatletter
\gdef\@copyrightpermission{
  \begin{minipage}{0.2\columnwidth}
   \href{https://creativecommons.org/licenses/by/4.0/}{\includegraphics[width=0.90\textwidth]{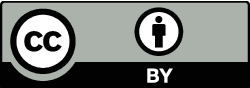}}
  \end{minipage}\hfill
  \begin{minipage}{0.8\columnwidth}
   \href{https://creativecommons.org/licenses/by/4.0/}{This work is licensed under a Creative Commons Attribution International 4.0 License.}
  \end{minipage}
  \vspace{5pt}
}
\makeatother

\setcopyright{ifaamas}
\acmConference[AAMAS '25]{Proc.\@ of the 24th International Conference
on Autonomous Agents and Multiagent Systems (AAMAS 2025)}{May 19 -- 23, 2025}
{Detroit, Michigan, USA}{Y.~Vorobeychik, S.~Das, A.~Nowé  (eds.)}
\copyrightyear{2025}
\acmYear{2025}
\acmDOI{}
\acmPrice{}
\acmISBN{}



\acmSubmissionID{<<OpenReview submission id>>}


\title[Tackling Uncertainties in MARL]{Tackling Uncertainties in Multi-Agent Reinforcement Learning through Integration of Agent Termination Dynamics}



\author{Somnath Hazra}
\affiliation{
  \institution{IIT Kharagpur}
  \city{Kharagpur}
  \country{India}}
\email{somnathhazra@kgpian.iitkgp.ac.in}

\author{Pallab Dasgupta}
\affiliation{
  \institution{Synopsys}
  \city{Santa Clara}
  \country{USA}}
\email{pallabd@synopsys.com}

\author{Soumyajit Dey}
\affiliation{
  \institution{IIT Kharagpur}
  \city{Kharagpur}
  \country{India}}
\email{soumya@cse.iitkgp.ac.in}


\begin{abstract}
Multi-Agent Reinforcement Learning (MARL) has gained significant traction for solving complex real-world tasks, but the inherent stochasticity and uncertainty in these environments pose substantial challenges to efficient and robust policy learning. While Distributional Reinforcement Learning has been successfully applied in single-agent settings to address risk and uncertainty, its application in MARL is substantially limited. In this work, we propose a novel approach that integrates distributional learning with a safety-focused loss function to improve convergence in cooperative MARL tasks. Specifically, we introduce a Barrier Function based loss that leverages safety metrics, identified from inherent faults in the system, into the policy learning process. This additional loss term helps mitigate risks and encourages safer exploration during the early stages of training. We evaluate our method in the StarCraft II micromanagement benchmark, where our approach demonstrates improved convergence and outperforms state-of-the-art baselines in terms of both safety and task completion. Our results suggest that incorporating safety considerations can significantly enhance learning performance in complex, multi-agent environments.
\end{abstract}


\begin{CCSXML}
<ccs2012>
   <concept>
       <concept_id>10003752.10010070.10010071.10010261.10010275</concept_id>
       <concept_desc>Theory of computation~Multi-agent reinforcement learning</concept_desc>
       <concept_significance>500</concept_significance>
       </concept>
   <concept>
       <concept_id>10003752.10010070.10010071.10010082</concept_id>
       <concept_desc>Theory of computation~Multi-agent learning</concept_desc>
       <concept_significance>300</concept_significance>
       </concept>
 </ccs2012>
\end{CCSXML}

\ccsdesc[500]{Theory of computation~Multi-agent reinforcement learning}
\ccsdesc[300]{Theory of computation~Multi-agent learning}


\keywords{Multi-Agent Reinforcement Learning, Distributional RL, Barrier Function}


         
\newcommand{\BibTeX}{\rm B\kern-.05em{\sc i\kern-.025em b}\kern-.08em\TeX}


\begin{document}


\pagestyle{fancy}
\fancyhead{}


\maketitle 


\section{Introduction}
\label{sec1}

Multi-Agent Systems (MAS) remain a highly challenging domain, owing to the inherent stochasticity in the environment, and concurrently learning agents; resulting in uncertain reward outcomes. Therefore the contemporary Multi-Agent Reinforcement Learning (MARL) literature has shifted from point estimates derived from expected global rewards \cite{sunehag2017value, rashid2020monotonic} to distributional analysis over returns \cite{bellemare2017distributional, dabney2018implicit} which provide a more informed understanding of the variability in future returns. However, the predicted distributions are prone to errors at the initial phases of training because the policy has not yet explored and trained enough to predict accurate distributions. In the context of MARL, the prediction errors become even more critical due to the collective uncertainty from multiple agents, where each agent's decisions can drastically impact both individual and collective outcomes. Our focus lies specifically on cooperative MARL, where a team of agents collaborates to achieve a shared objective in a stochastic environment \citep{stefano2024multi}.

Despite the effectiveness of the Centralized Training and Decentralized Execution (CTDE) paradigm \citep{lowe2017multi} for cooperative MARL most CTDE-based algorithms maximize returns and overlook the uncertainty due to untrained parameters in the policy that captures the inherent stochasticity in MAS. Here agents must contend with not only uncertain rewards but also additional safety constraints resulting from the stochasticity. Our contribution is to identify the innate fault tolerant MARL formulation from the system such that the system does not go beyond a pre-defined safety boundary. For example, a team game cannot be won if players keep getting eliminated. These safety objectives hold importance during the early phases of training since it is hard to estimate accurate distributions by a distributional model owing to exploration strategies and partial observability of the agents.

\begin{figure}[!ht]
    \centering
    \includegraphics[width=0.8\linewidth]{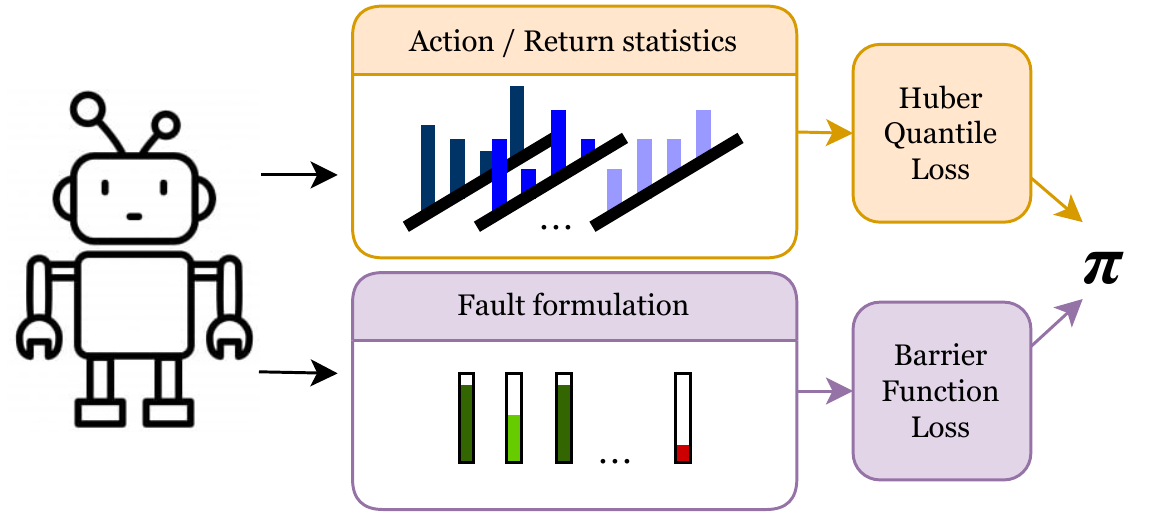}
    \caption{Integration of the safety constraints along with loss functions calculated with respect to returns, helps reduce the uncertainity, especially during the initial phase of training.}
    \label{fig1}
\end{figure}

As an instance of such a situation, let us consider a battle scenario between two agent groups. The reward can be based on intermediate strategies decided by the teams. But it is unsure whether their strategy will work, since any agent army does not know about the strategies planned by the other army. Moreover the agents may not be experts in carrying out their decided objective. However, one property that is overlooked, is that the battle cannot be won if there are not enough agents left to fight; i.e., there is an inherent fault tolerant formulation in the system. For example, in battle simulations like StarCraft II, minimizing ally casualties can be as important as maximizing reward \citep{samvelyan2019starcraft}.

Motivated by the gap in current research, our work addresses the uncertainties of the system via integration of safety considerations from the inherent faults in the MAS. While previous efforts have successfully combined Distributional RL with MARL to improve risk sensitivity \citep{qiu2020rmix, sun2021dfac}, these approaches largely focus on algorithmic modifications rather than optimizing the learning process by using the available information regarding the safety-critical objectives. Our contributions are summarized as follows:
\begin{itemize}
    \item We propose a novel loss function that incorporates safety constraints into the optimization process, derived from the safety boundary of the inherent faults in the system. By explicitly accounting for safety alongside reward maximization, we aim to accelerate learning, tackle environmental uncertainties, and improve long-term policy performance in risk-sensitive environments. We also use the Dueling Networks \citep{wang2016dueling} in our architecture to eliminate ineffective actions.
    \item We also suggest an architectural improvement for the local policy network in the CTDE paradigm to utilize the return distribution for prioritizing the important components from the local observation.
    \item We validated our approach on the StarCraft II micromanagement \citep{samvelyan2019starcraft} and MetaDrive\cite{li2021metadrive} benchmarks, demonstrating its efficacy compared to existing baselines and state-of-the-art distributional MARL algorithms.
\end{itemize}

More specifically, our work contributes to bridging the gap between theoretical advancements in Distributional MARL and practical needs in real-world MARL environments, particularly those with critical safety concerns.


\section{Preliminaries}
\label{sec2}

This section outlines some of the preliminary concepts.

\subsection{Decentralized POMDP}

The de facto standard for modeling cooperation in MARL is a Decentralized Partially Observable MDP or Dec-POMDP \citep{oliehoek2008optimal}. It is defined using the tuple $\mathcal{M} = \langle \mathcal{N}, \mathcal{S}, \mathcal{U}, \Omega, P, \mathcal{O}, r, \gamma \rangle$, where $\mathcal{N} \equiv \{a_1, a_2, ..., a_n\}$ is a finite set of agents, and $s \in \mathcal{S}$ represents the global state of the system. At each time-step, the individual agents choose an action $u_i \in \mathcal{U}$ for $i \in \mathcal{N}$ based on observation $o_i \in \Omega$, with probability $\mathcal{O}(o_i | s, u_i)$. The joint action, $\textbf{u} = [u_i]_{i=1}^{n}$, changes the state of the system from $s$ to $s' \sim P(\cdot | s, \textbf{u})$; yielding a joint reward $r(s, \textbf{u})$. $\gamma \in [0, 1)$ is the discount factor.

Value functions provide an estimate of the reward that can be obtained given a state, or when taking an action from a state. Accordingly, the state value function for the system is given by, $V_{\bm{\pi}}(s) = \mathbb{E}[\sum_{t=0}^{\infty} \gamma^{t} r(s_t, \textbf{u}_t) | s_0=s, \bm{\pi}]$; and the state-action value function is given by, $Q_{\bm{\pi}}(s, \textbf{u}) = r(s, \textbf{u}) + \gamma \mathbb{E}_{s'}V_{\bm{\pi}}(s')$. The formal objective of the co-operative system is to find a joint policy $\bm{\pi} \equiv \langle \pi_1, ..., \pi_n \rangle$ that maximizes the expected return given by:

\begin{equation}
\label{eqn1}
    \bm{\pi}* = \arg \max_{\bm{\pi}} \mathbb{E}_{(s_t, \textbf{u}_t) \sim (P, \bm{\pi})} \left[ \sum_{t=0}^{\infty} \gamma^{t} r(s_t, \textbf{u}_t) \right]
\end{equation}

\subsection{Centralized Training and Decentralized Execution}

Multi-agent algorithms generally follow the Centralized Training Decentralized Execution (CTDE) paradigm \cite{lowe2017multi}. Agents learn collaboratively during training using feedback from the environment. This phase may involve information exchange with a central approximation network. This phase is termed as the Central Training phase. The Central Value function accumulates the values of the individual policies $\pi_i$ in a forward propagation determined by the \emph{Credit Assignment} strategy \cite{chang2003all} of the algorithm. The strategies build on the \emph{Individual Global Max (IGM)} principle which states that locally optimal actions maximize the global reward of the system given by the following equation \cite{hostallero2019learning}.

\begin{equation}
    \arg \max_{\textbf{u} \in \mathcal{U}^n} Q_{\bm{\pi}}(s, \textbf{u}) = \left[ \arg \max_{u_i \in \mathcal{U}} Q_{\pi_i}(o_i, u_i) \right]_{i=1}^{n}
\end{equation}

During the Decentralized Execution phase, agents act independently based on local observations.

\subsection{Distributional RL}

In single-agent RL, distributional RL aims to learn the complete distribution over the returns, $Z(s, \mathbf{u})$, instead of learning the expected return, $Q(s, \mathbf{u})$. The distribution over the returns can be approximated using a categorical distribution \cite{bellemare2017distributional} or a quantile function \cite{dabney2018implicit}. We here use Implicit Quantile Network (IQN) \cite{dabney2018implicit}, that models $Z(s, \mathbf{u})$ using an inverse cumulative distribution function (CDF), $F^{-1}(s, \mathbf{u} | w)$, $w \in [0, 1]$. IQN defines the distributional Bellman operator to update $Z(s, \mathbf{u})$ using the Huber quatile regression loss, a distributional version of the temporal difference error. During execution, the action is chosen based on the largest expected return, $\arg \max_{\textbf{u}} \mathbb{E}[Z(s, \mathbf{u})]$. The distributional analysis in MARL is based on the \emph{Distributional IGM (DIGM)} principle, which states that, given individual state-action value distributions, $[Z_{\pi_i}(o_i, u_i)]_{i=1}^n$, the joint state-action value distribution, $Z_{\bm{\pi}}(s, \mathbf{u})$, is maximized if the following equation holds \cite{sun2021dfac}.

\begin{equation}
    \arg \max_{\textbf{u} \in \mathcal{U}^n} Z_{\bm{\pi}}(s, \textbf{u}) = \left[ \arg \max_{u_i \in \mathcal{U}} Z_{\pi_i}(o_i, u_i) \right]_{i=1}^{n}
\end{equation}


\section{Related Work}
\label{sec3}

Efficiency and scalability have been central challenges in the MARL literature, particularly when estimating the global expected return (for using the reward, $r$ as a reference during training) from local policies (in decentralized execution). Value factorization methods have been widely adopted to address this, adhere to the IGM principle. Initial approaches like VDN \citep{sunehag2017value} directly sum local value functions; while QMIX \cite{rashid2020monotonic} uses a monotonic mixing function to ensure compatibility with the global return. QTRAN \cite{son2019qtran} introduced linear constraints to factorize the global return without the monotonicity assumption. Transformer-based architectures, such as Qatten \cite{yang2020qatten}, have also been explored in MARL. QPLEX \cite{wang2020qplex} leverages the dueling network architecture \cite{wang2016dueling} to improve learning generalization across actions, which we have used in a distributional setting in this work. Despite these advancements, the stochasticity inherent in multi-agent systems complicates the use of expected returns, often hindering training efficiency.

In parallel, distributional RL has made significant strides in the single-agent domain, introducing algorithms such as C51 \cite{bellemare2017distributional}, implicit quantile networks (IQN) \cite{dabney2018implicit}, and others \cite{dabney2018distributional, rowland2019statistics}. More recently, there have been increasing interest in unifying distributional RL with MARL to improve risk-sensitivity and robustness. Notable works include restructuring the IGM principle to a distributional Q-learning standpoint \citep{sun2021dfac, shen2024riskq}; or exploring the methods for aggregating reward distributions for each action, considering the sources of risk \citep{son2021disentangling, oh2022risk}. Works such as RMIX \cite{qiu2020rmix} integrate risk-aware metrics like Conditional Value at Risk (CVaR) into the QMIX framework, while DFAC \cite{sun2021dfac} introduces the Distributional IGM (DIGM) principle, extending value factorization methods like VDN and QMIX to the distributional setting. DIGM was further refined in subsequent work \cite{sun2023unified}. ResQ \cite{shen2022resq} builds upon DMIX by introducing residual Q-functions for more accurate return estimation, while other studies have explored risk-sensitive aggregation of reward distributions \cite{son2021disentangling, oh2022risk}, uncertainty-aware exploration strategies \cite{oh2023toward}, and modelling uncertainties related to reward \cite{hu2022distributional}.

Action shielding has been another widely explored venue in RL and MARL to prevent unsafe actions during training and execution. Alshiekh et al. \cite{alshiekh2018safe} introduced a shielding mechanism to enforce safety constraints by overriding unsafe actions based on a safety specification. Building on this, Bharadwaj et al. \cite{bharadwaj2019synthesis} proposed minimum-cost shields for multi-agent systems to ensure safe coordination. Factored shields curated on linear temporal logic properties \cite{elsayed2021safe} is another venue that has been explored for MARL. For environments with nonlinear dynamics, model predictive shielding \cite{bastani2021safe} provides an efficient approach to maintaining safety. Zhang et al. proposed a multi-agent version \cite{zhang2019mamps}, where only a subset of agents, determined by a greedy algorithm, need to use a backup policy. While these methods focus on action-level intervention, our approach incorporates safety directly into the policy learning process via the barrier function, complementing action shielding techniques by prioritizing long-term safety considerations during training.

Our work builds on these existing distributional MARL approaches by incorporating agent-specific information to enhance policy learning; unlike previous studies that focus primarily on algorithmic adaptations of MARL to the distributional context. This enables more effective training in stochastic multi-agent environments.


\section{Methodology}
\label{sec4}

The dynamic nature of MARL environments arises from multiple agents acting concurrently, often following independent local heuristics (policies). This interaction leads to inherent uncertainty in the outcomes (returns), especially during the initial phases of training. Although exploration strategies can help reduce the epistemic uncertainty by utilizing the generalization capability of neural networks, it remains an integral part of multi-agent environments at the start of training \cite{mavrin2019distributional, jiang2024importance}. To address this challenge, we propose leveraging the safety implications available from the environment, e.g., agent deaths, to boost convergence during policy learning. We use a loss function based on Control Barrier Function (CBF) \citep{qin2021learning, yang2023model}, alongside the Huber quantile loss, for training the local policies. In the following sub-section we formally describe the Barrier Function loss used in this work.

\subsection{The Barrier Function Loss}

Barrier Function is employed to ensure that the agents' trajectories stay within a "safe" region of the state space. This barrier defines constraints that the policies must satisfy, ensuring that unsafe states are avoided during training. Formally, the barrier certificate is defined as follows \citep{yang2023model}.

\begin{definition}
  \textbf{Barrier Certificate}. A barrier certificate, $B^{\bm{\pi}}$, for a policy, $\bm{\pi}$ is a function $B^{\bm{\pi}}: \mathcal{S} \rightarrow \mathbb{R}$, such that the following properties hold.
  \begin{enumerate}
      \item $\forall s \in \mathcal{S}_0, B^{{\bm{\pi}}}(s) \leq 0$
      \item $\forall s \in \mathcal{S}_{\text{unsafe}}, B^{{\bm{\pi}}}(s) > 0$
      \item $\forall s \in \mathcal{S}, B^{{\bm{\pi}}}(s') - B^{{\bm{\pi}}}(s) \leq -\lambda_B B^{{\bm{\pi}}}(s)$ 
  \end{enumerate}
  where $s' \sim P(\cdot | s, \textbf{u})$, $\textbf{u} \sim \bm{\pi}$, and $0 < \lambda_B < 1$ is a hyper-parameter controlling the convergence rate.
\end{definition}

In our work, the barrier function is designed to capture the number of agent terminations as a function of the state over the course of a trajectory. Specifically, the barrier function is defined using the following equation.
\begin{equation}
    B^{{\bm{\pi}}}(s) = (\text{agents dead at } s) + \gamma_B B^{{\bm{\pi}}}(s')
\end{equation}
where $s' \sim P(\cdot | s, \textbf{u})$, and $\gamma_B$ signifies the discount factor for the barrier function. Intuitively, it can be thought of as an estimation of vulnerability (opposite of agent health) of the collective group of agents. The formulation bears similarity with the definition of the value function $V_{\bm{\pi}}$, as it includes the discounted future state estimate at $s'$ in the current estimate; since in an MDP the state $s' \sim P(\cdot | s, \cdot)$. Among the three conditions above, our policy should satisfy the third condition (which is the invariant property), i.e., $\forall s \in \mathcal{S}, B^{{\bm{\pi}}}(s') - B^{{\bm{\pi}}}(s) \leq -\lambda_B B^{{\bm{\pi}}}(s)$, ensuring that the barrier function consistently decreases along the trajectory \citep{yang2023model}. 

To enforce this condition, we integrate the following loss function for policy optimization:

\begin{equation}
    \mathcal{L}_B(\bm{\pi}) = \frac{1}{|\mathcal{S}|} \sum_{\substack{s,s' \in \mathcal{S} \\ s' \sim P(\cdot | s, \textbf{u})}} \max \left( B^{{\bm{\pi}}}(s') - (1 - \lambda_B) B^{{\bm{\pi}}}(s) , 0 \right)
\end{equation}
where $|\mathcal{S}|$ represents the number of visited states over a trajectory. This loss function is derived from the number of agent deaths, which is a safety-critical metric. By focusing on this deterministic safety criterion, the method improves the accuracy of policy updates in unsafe scenarios.

An alternative approach could be to calculate the barrier loss for each individual agent, and update the local policies $\pi_i$ accordingly. But this approach has two notable drawbacks;
\begin{inparaitem}
    \item given the length of the trajectory, the impact of this loss at each state may be a insufficient to derive meaningful gradient updates, particularly when averaged across the trajectory,
    \item in a cooperative environment, it is unrealistic to attribute the responsibility of an agent’s termination solely to the corresponding local policy, as the outcome is influenced by the actions of all agents.
\end{inparaitem}

Thus, our method leverages the global safety constraint across the trajectory, promoting stable and efficient convergence in MARL.

\begin{figure*}[!t]
    \centering
    \includegraphics[width=0.8\linewidth]{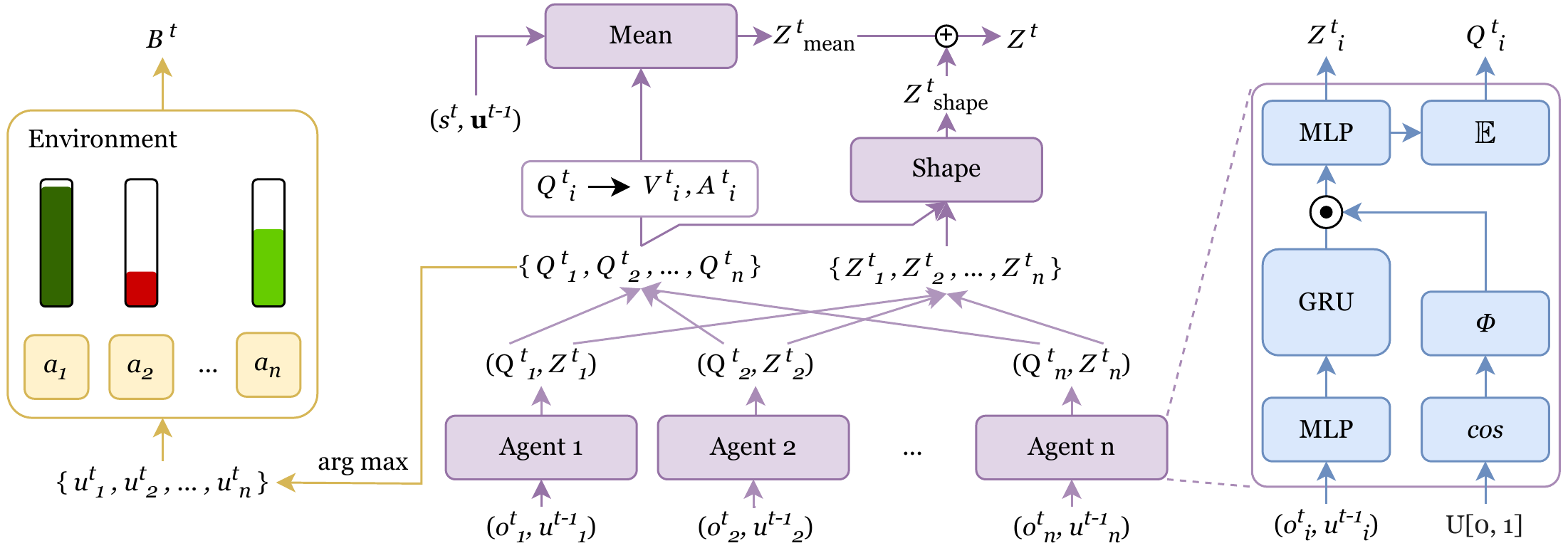}
    \caption{The overall framework illustrating the training objective. The calculated errors are combined using gradient manipulation and back-propagated through the central training model. References to the abbreviated notations from the text: $Z_i^t = Z_{\pi_i}(o_i^t, u_i^t)$, $Q_i^t = Q_{\pi_i}(o_i^t, u_i^t)$, $t$ is time, $A_i^t = A_{\pi_i}(o_i^t, u_i^t)$, $V_i^t = V_{\pi_i}(o_i^t)$.}
    \label{fig2}
\end{figure*}

\subsection{Optimization on Return}

After defining the safety objective through the CBF loss, we integrate it with the DRL objective to optimize the reward function. We use the Huber-quantile regression loss \cite{dabney2018implicit} for $Q_{\bm{\pi}}$, allowing us to model the stochastic return distribution effectively.

In cooperative multi-agent systems, value factorization techniques assist CTDE. However, to account for the stochasticity of value functions, we require a distributional variant of value factorization that satisfies Distributional IGM \cite{sun2021dfac}. The Mean-Shape decomposition \cite{sun2021dfac, sun2021distributional} addresses this by separating distribution $Z$ over global utilities into: $Z = \mathbb{E}[Z] + (Z - \mathbb{E}[Z]) = Z_{\text{mean}} + Z_{\text{shape}}$. The mean term in the mean-shape decomposition $\mathbb{E}[Z]$, represents the expected value, while the shape term captures deviations from this expectation \cite{lyle2019comparative, sun2021distributional}. This ensures compliance with DIGM while allowing effective factorization of stochastic utilities.

For factorization, we adopt a distributional version of the QPLEX algorithm \citep{wang2020qplex}, using a dueling network architecture \citep{wang2016dueling}. Following \citep{sun2023unified}, the global action-value function $Q_{\bm{\pi}}$ is factorized as follows.

\begin{align}
\begin{split}
    Q_{\bm{\pi}}(s, \mathbf{u}) = & \sum_{i=1}^n V_{\pi_i}(o_i) + \sum_{i=1}^n \lambda_i(o_i, u_i) \cdot A_{\pi_i}(o_i, u_i) \\
    & + \sum_{i=1}^n (Z_{\pi_i}(o_i, u_i) - Q_{\pi_i}(o_i, u_i))
\end{split}
\end{align}
Here $V_{\pi_i}(o_i)$ denotes the value function for agent $i$'s observation, $A_{\pi_i}(o_i, u_i)$ is the advantage function, and $Z_{\pi_i}(o_i, u_i)$ captures the distributional nature of local utilities. The overall architecture for obtaining the global distribution over returns and the CBF value from the environment is illustrated in Figure \ref{fig2}.

\subsection{Overall Training Objective}

Now that we have obtained the formulations for calculating both the loss functions, we need to integrate them to calculate our overall loss for the gradient descent. In the following formulation for calculating the overall loss, we consider the Huber quantile loss as $\mathcal{L}_Q$ and the CBF loss as $\mathcal{L}_B$. We combine the loss functions using gradient manipulation, where the gradient is calculated from the backpropagation of the individual loss function components, as studied in \cite{yu2020gradient} for multi-task learning.

The objective of gradient manipulation is to reduce the deviation of gradients due to the reward and barrier components. Let the gradient due to $\mathcal{L}_B$ be represented as $g_B$, and that due to the $\mathcal{L}_Q$ as $g_Q$. Let $\theta$ be the angle between $g_B$ and $g_Q$. Since we work with gradient manipulation here, the gradients occasionally may need to be projected to a plane that is normal to the other gradient; we will explain this in more detail in the following paragraph. Let the projection of gradient of $g_B$ on the normal plane of $g_Q$ be given by $g_B^+$; while the projection of gradient of $g_Q$ on the normal plane of $g_B$ be represented by $g_Q^+$.

\begin{figure}[!ht]
    \centering
    \includegraphics[width=0.85\linewidth]{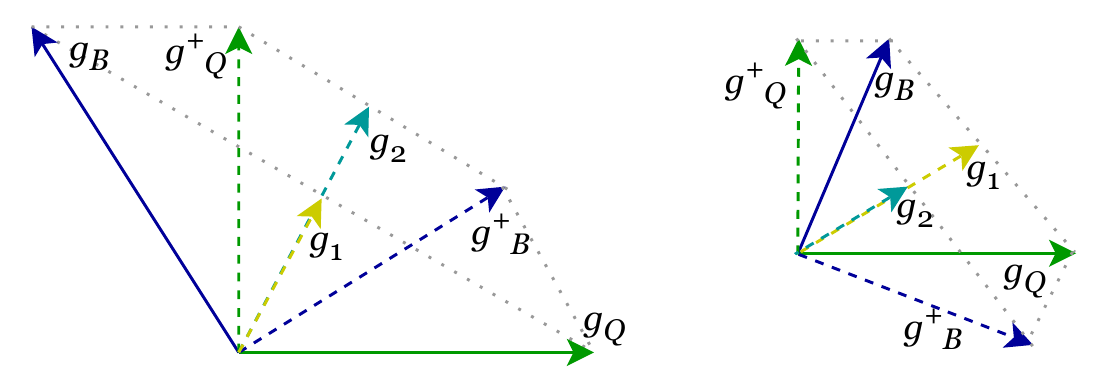}
    \caption{Analysis of gradient manipulation.}
    \label{fig4}
\end{figure}

Owing to optimization on two tasks, there may occasionally occur a conflict among the gradients $g_Q$ and $g_B$, i.e. $\theta > 90^{\circ}$. In those cases, it is essential to project the gradient vector onto the normal plane of the other vector for effective updates to the network \cite{yu2020gradient}. For cases when $\theta \leq 90^{\circ}$, i.e., the gradients do not conflict, the gradient manipulation is not required. The overall gradient calculation, $g$, is thus given by the following equation.

\begin{align}
    \label{eqn7}
    g = \begin{cases}
        \beta_Q^+ \cdot g_Q^+ + \beta_B^+ \cdot g_B^+ , \quad \theta > 90^{\circ} & = g_1 \text{(let)} \\
        \beta_Q \cdot g_Q + \beta_B \cdot g_B , \quad \theta \leq 90^{\circ} & = g_2 \text{(let)}
    \end{cases} \\
    g_Q^+ = g_Q - \frac{g_Q \cdot g_B}{|| g_B ||^2} g_B  \quad \quad
    g_B^+ = g_B - \frac{g_B \cdot g_Q}{|| g_Q ||^2} g_Q
\end{align}
where $\beta_Q^+, \beta_B^+, \beta_Q, \beta_B$ denote the weights of the gradients. With this strategy we can optimize on both the tasks more efficiently. As shown in Equation \ref{eqn7}, we have two cases for the gradients corresponding to the loss functions calculated: when the gradients conflict, i.e., $\theta > 90^{\circ}$; and when the gradients do not conflict, i.e., $\theta \leq 90^{\circ}$. The process is illustrated in Figure \ref{fig4}.

1. \emph{When $\theta > 90^{\circ}$}: We consider $\beta_Q^+$ and $\beta_B^+$ are equal to 0.5. Then according to the given equations, we have the following.
\begin{align*}
    g_1 & = 0.5 \cdot g_Q^+ + 0.5 \cdot g_B^+ \\
    & = 0.5 \cdot \left( g_Q - \frac{g_Q \cdot g_B}{|| g_B ||^2} g_B \right) + 0.5 \cdot \left( g_B - \frac{g_B \cdot g_Q}{|| g_Q ||^2} g_Q \right) \\ 
    g_2 & = 0.5 \cdot g_Q + 0.5 \cdot g_B
\end{align*}
We know that, when $\theta > 90^{\circ}$, $(g_Q \cdot g_B) < 0 \Rightarrow -(g_Q \cdot g_B) > 0$. Therefore, according to the above equations, $g_1 > g_2$. So when $\theta > 90^{\circ}$, we use $g_1$ as our gradient.

2. \emph{When $\theta \leq 90^{\circ}$}: We consider $\beta_Q$ and $\beta_B$ are equal to 0.5. We know that, when $\theta \leq 90^{\circ}$, $(g_Q \cdot g_B) \geq 0 \Rightarrow -(g_Q \cdot g_B) \leq 0$. Therefore, according to the above equations, $g_1 \leq g_2$. So when $\theta \leq 90^{\circ}$, we use $g_2$ as our gradient.

\subsection{The Local Policy Network}
\label{sec4_4}

\begin{figure}[!ht]
    \centering
    \includegraphics[width=0.65\linewidth]{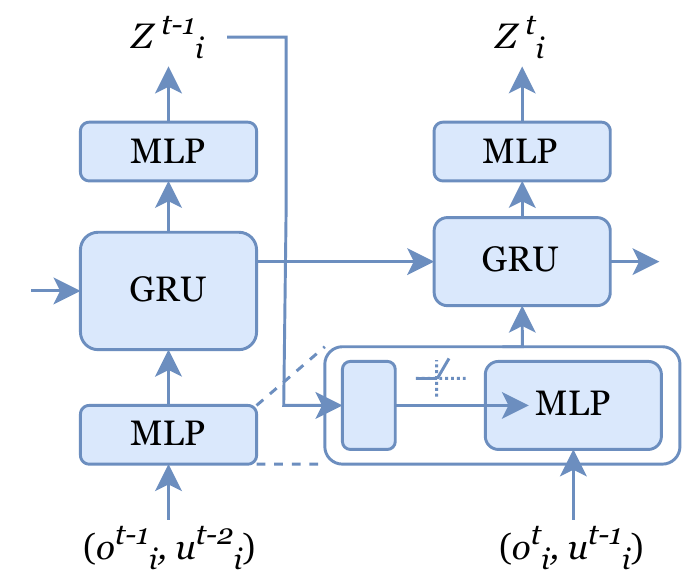}
    \caption{The input layer weights of $\pi_i$ generated using a hyper-network frm previous step's return distribution.}
    \label{fig3}
\end{figure}

Now we focus on the architecture of the local policy, $\pi_i$. As illustrated in Figure \ref{fig2}, we adopt IQN \citep{dabney2018implicit} for sampling distributions over local actions. To address partial observability in local observations, we use a recurrent neural network, as recommended by \cite{hausknecht2015deep}.

The input layer of the local policy is designed using a hyper-network layer, as shown in Figure \ref{fig3}. Since $\pi_i$ predicts a distribution over returns, representing future rewards, we leverage it to dynamically predict weights for the input layer. The hyper-network thus allows for more adaptable weight generation based on the distributional information from future states. A ReLU activation is used to filter the non-negative weights motivated by two conditions: 
\begin{inparaitem}
    \item negative influences on individual observation elements can propagate through subsequent layers, potentially distorting the representation of other elements,
    \item given the inherent uncertainty in the return, an optimistic approach with non-negative weights is preferable.
\end{inparaitem}
For clarity, Figure \ref{fig3} omits the details of IQN, focusing instead on the structure of the hyper-network.


\section{Theoretical Analysis}
\label{sec5}

In this section we study theoretical guarantees on convergence using our approach and provide sample complexity advancements. Since we are not proposing a new methodology for value decomposition here, we therefore assume the convergence of the background value decomposition methodology used for MARL, studied in more detail here \cite{wang2021towards}. Additionally, we assume Policy Gradient as our optimization approach; and for proving convergence, we assume a parameterized policy $\bm{\pi}_m$ (with policy parameters $m$) in tabular setting with direct policy parameterization.

\begin{theorem}
    Consider a tabular setting for our policy with direct policy parameterization, with the number of policy parameters, $m \in \Delta(\mathcal{U})^{|\mathcal{S}|}$. Let the step size for the NPG update be $\alpha = (1 - \gamma)^{1.5}/ \sqrt{|\mathcal{S}| |\mathcal{U}|T}$, where $T$ is the number of policy update epochs. Then, with confidence $(1 - \delta)$, and assuming convergence of the underlying TD update algorithm, we have the following.
    \begin{equation*}
        V_{\bm{\pi}*}(s_0) - \mathbb{E}[V_{\bm{\pi}_{m_T}}(s_0)] \leq \Theta \left( \sqrt{\frac{|\mathcal{S}| |\mathcal{U}|}{(1 - \gamma)^3 T}} \right)
    \end{equation*}
\end{theorem}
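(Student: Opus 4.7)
The plan is to follow the now-standard NPG convergence argument in the style of Agarwal, Kakade, Lee and Mahajan, adapted to our distributional, value-decomposed setting where the underlying TD update supplies the advantage estimates. First I would invoke the performance difference lemma, writing
\[
V_{\bm{\pi}*}(s_0) - V_{\bm{\pi}_m}(s_0) = \frac{1}{1-\gamma}\, \mathbb{E}_{s \sim d^{\bm{\pi}*}_{s_0}}\, \mathbb{E}_{\mathbf{u}\sim \bm{\pi}*(\cdot|s)}\bigl[A_{\bm{\pi}_m}(s,\mathbf{u})\bigr],
\]
so that bounding the suboptimality reduces to controlling an expected advantage under the state-visitation distribution of the optimal policy. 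In tabular direct parameterization, the NPG update coincides with the exponentiated/soft-max MWU update $\bm{\pi}_{t+1}(\mathbf{u}|s) \propto \bm{\pi}_t(\mathbf{u}|s) \exp(\alpha \widehat A_t(s,\mathbf{u}))$, where $\widehat A_t$ is the TD-based advantage estimate derived from the factorized critic described in Section~4.

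Next I would carry out a per-state mirror-descent regret analysis, using KL-divergence as the Bregman divergence. For any fixed comparator $\bm{\pi}*(\cdot|s)$ this yields, after telescoping over $t=1,\ldots,T$,
\[
\sum_{t=1}^{T} \bigl\langle \bm{\pi}*(\cdot|s) - \bm{\pi}_t(\cdot|s),\, \widehat A_t(s,\cdot) \bigr\rangle \;\leq\; \frac{\mathrm{KL}\bigl(\bm{\pi}*(\cdot|s)\,\|\,\bm{\pi}_1(\cdot|s)\bigr)}{\alpha} + \frac{\alpha}{2}\sum_{t=1}^{T} \|\widehat A_t(s,\cdot)\|_\infty^2.
\]
Using $\mathrm{KL} \leq \log|\mathcal{U}|$ from a uniform initialization and the standard bound $\|A\|_\infty \leq 1/(1-\gamma)$, and then taking an expectation over $s \sim d^{\bm{\pi}*}_{s_0}$ and combining with the performance difference lemma, one obtains a bound of the form
\[
\frac{1}{T}\sum_{t=1}^{T}\bigl(V_{\bm{\pi}*}(s_0) - V_{\bm{\pi}_t}(s_0)\bigr) \;\leq\; \frac{1}{1-\gamma}\left(\frac{\log|\mathcal{U}|}{\alpha T} + \frac{\alpha}{2(1-\gamma)^2}\right) + \frac{1}{T}\sum_{t=1}^{T}\varepsilon_t,
\]
where $\varepsilon_t$ captures the gap between $\widehat A_t$ and the true advantage. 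Picking $\alpha = (1-\gamma)^{1.5}/\sqrt{|\mathcal{S}||\mathcal{U}|T}$ balances the first two terms and produces the advertised $\sqrt{|\mathcal{S}||\mathcal{U}|/((1-\gamma)^3 T)}$ rate, once $\log|\mathcal{U}|$ is absorbed in the $\Theta(\cdot)$ notation.

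The third piece is to replace the running average on the left by the final iterate $\bm{\pi}_{m_T}$; here I would either invoke monotone improvement of NPG (which is immediate for the exact version and goes through for the stochastic version up to an additive error absorbed into $\varepsilon_t$) or simply select $m_T$ by uniform sampling over iterates, which preserves the average-to-last conversion in expectation. Finally, the stochastic error $\varepsilon_t$ is handled by the assumed TD convergence: conditioning on the high-probability event that the TD estimator at each of the $T$ epochs is accurate to within $O((1-\gamma)^{-1}\sqrt{|\mathcal{S}||\mathcal{U}|/T})$ and taking a union bound across the $T$ epochs gives the global $(1-\delta)$ confidence statement, with the union-bound slack folded into the $\delta$-dependent constants.

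The main obstacle I expect is the last step: pushing the stochastic TD error through the per-state mirror-descent inequality while keeping the $|\mathcal{S}||\mathcal{U}|$ dependence inside the square root rather than allowing it to degrade to $|\mathcal{S}|^2|\mathcal{U}|$. This requires a careful use of the distribution-mismatch coefficient $\|d^{\bm{\pi}*}_{s_0}/\mu\|_\infty$ for the on-policy sampling measure $\mu$, together with a Bernstein-type concentration inequality for the sampled advantages to avoid the naive worst-case $\|\widehat A\|_\infty$ bound in the regret term; sharpening this estimate is what makes the $\Theta(\sqrt{|\mathcal{S}||\mathcal{U}|/((1-\gamma)^3 T)})$ dependence tight and is where the $(1-\delta)$ factor enters.
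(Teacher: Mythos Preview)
Your route is genuinely different from the paper's. You run the Agarwal--Kakade--Lee--Mahajan mirror-descent argument: softmax/MWU updates, a per-state KL regret bound, telescoping, and then absorbing the TD error. The paper instead follows the CRPO analysis of Xu et al.\ (2021). It first proves a \emph{performance improvement} lower bound on $V_{t+1}(s_0)-V_t(s_0)$ (Lemma~\ref{supp_lem2}), then an \emph{optimality gap} upper bound (Lemma~\ref{supp_lem3}) of the form
\[
V_*(s_0)-V_t(s_0)\;\le\;\frac{2\alpha v_{\max}^2|\mathcal{S}||\mathcal{U}|}{(1-\gamma)^3}+\frac{3(1+\alpha v_{\max})}{(1-\gamma)^2}\,\|Q_t-\bar Q_t\|_2,
\]
obtained by bounding $\|m_{t+1}-m_t\|_2=\alpha\|\bar Q_t\|_2$ directly from the projected update in the simplex and then $\|Q_t\|_2$ in terms of $|\mathcal{S}||\mathcal{U}|$. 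Summing over $T$, assuming $\|Q_t-\bar Q_t\|_2\le\sqrt{(1-\gamma)|\mathcal{S}||\mathcal{U}|/T}$ with probability $1-\delta/T$, and plugging in the stated $\alpha$ gives the rate with no KL telescoping or $\log|\mathcal{U}|$ term at all. A side remark: the theorem is stated for \emph{direct} parameterization ($m\in\Delta(\mathcal{U})^{|\mathcal{S}|}$), where the update is a projected Euclidean step; your identification of the NPG update with the exponentiated/MWU rule holds for the \emph{softmax} parameterization, not this one, so the mirror-descent machinery you invoke does not match the algorithm actually being analyzed.

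There is also a real gap. The algorithm here is not single-objective NPG: every update mixes the reward gradient $g_Q$ with the barrier-function gradient $g_B$ via the PCGrad projection of Section~4.3. Your regret inequality assumes the update direction at step $t$ is (an estimate of) the reward advantage $\widehat A_t$, and the telescoping breaks once that is no longer true. The paper deals with this by observing that Lemmas~\ref{supp_lem2}--\ref{supp_lem3} apply verbatim to the barrier objective $V^B$, summing both optimality gaps, and then invoking Lemma~9 of CRPO to argue that reward-improving steps occupy at least half of the $T$ iterations; this costs only a factor of $2$ in the final bound. Your proposal never addresses this two-objective structure, and without an analogous separation argument the conclusion does not follow for the algorithm in the paper.
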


The proof for the above theorem is provided in the Supplementary Material, Section \ref{supp_sec3}. In the following sub-section we formally define our safety criterion, and provide a probabilistic confidence bound on the safety violation for the defined safety criterion.

\begin{figure*}[!ht]
    \centering
    \includegraphics[width=.98\linewidth]{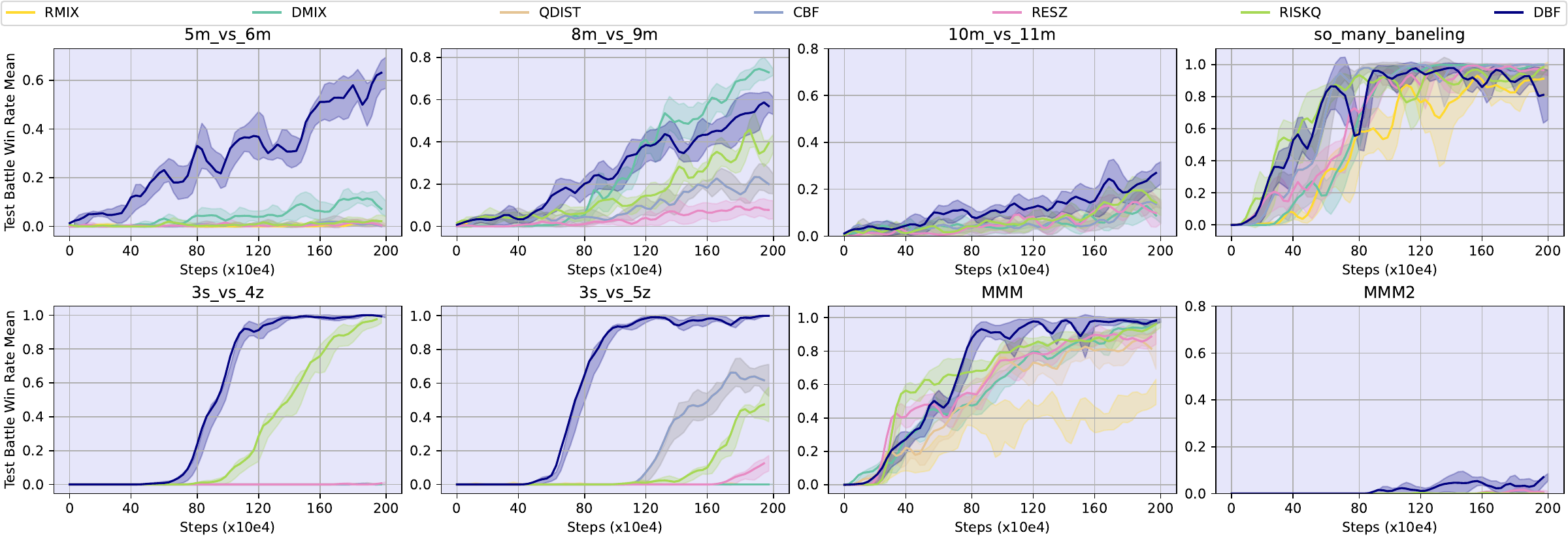}
    \caption{Comparison with the state-of-the-art algorithms using the StarCraft hard and super-hard scenarios. Our method is marked as DBF (Distributional MARL with a Barrier Function based on agent casualties).}
    \label{fig_res1}
\end{figure*}

\subsection{Safety Verification}

With the safety objective, we can verify the safety of the system by introducing a probabilistic verification method for the system. In (MA)RL, classical methods for safety verification the make binary decision about whether the system is safe or unsafe cannot be done, since the system dynamics or the knowledge about the unsafe states are not available. So we address the safety verification from a quantitative safety perspective; i.e., we determine the probability of obtaining an unsafe trajectory remaining within a safety target with some confidence. Here a safe trajectory means reaching the goal state with some agents still alive. 

Our barrier function, although calculated from agent terminations, is not a direct indicator of the terminations. However, we would like to express our constraint in terms of agent terminations. So let $V^{B}_{\bm{\pi}}(\tau)$ be the sum of agent terminations over the trajectory, where $\tau$ is a test trajectory sampled with the trained policy $\bm{\pi}$. Therefore, we can write our objective as a chance-constrained optimization program (CCP) \cite{campi2011sampling} as: given a parameter $\beta \in (0, 1)$, our objective is to compute $\varepsilon \in (0, 1)$ such that with confidence $\geq (1 - \beta)$ our safety constraint is satisfied, i.e., $\tau$ is safe, with probability $\geq (1 - \varepsilon)$. Mathematically, $\tau$ is considered safe if $V^{B}_{\bm{\pi}}(\tau)$ is below a given safety threshold $\omega$. Our objective can be formally restated as, with confidence at least, $(1 - \beta)$,
\begin{equation}
\label{eqn9}
    P(V^{B}_{\bm{\pi}}(\tau) \leq \omega) \geq (1 - \varepsilon)
\end{equation}
Note that $\omega$ can be considered equal to 0; but it may have some disadvantages. When it is evident that there may be casualties in the path to the goal state, taking $\omega = 0$ may harm the return since the policy will deviate from the optimal possible policy owing to the  gradients from the barrier function loss. Thus $\omega$ is kept to be some positive number dependent on the number of agents, $n$; e.g., $\omega = (n-1)$. Finally, from equations \ref{eqn1} and \ref{eqn9}, we can rewrite our constrained optimization problem as follows.
\begin{gather*}
    \arg \max_{\bm{\pi}} \mathbb{E} \left[ \sum_{t \sim \tau} \gamma^{t} r(s_t, \textbf{u}_t) \right] \\
    \text{s.t.} \quad P(V^{B}_{\bm{\pi}}(\tau) \leq \omega) \geq (1 - \varepsilon)
\end{gather*}

We can solve the above CCP problem using a sampling approach in place of the probabilistic constraint, where we substitute the constraint with $N$ i.i.d. sample constraints as: $V^B_{\bm{\pi}}(\tau_i) \leq \omega, \forall i \in [1, N]$\footnote{Please note that here $N$ represents $N$ i.i.d. samples, which is different from $\mathcal{N}$ that represents the set of agents.}. The resulting solution may not be a good approximation of the CCP solution. So for improving the robustness of the solution, we remove some $k$ samples from $N$ \cite{campi2011sampling, singh2024pas}. In other words, we can say that we are removing $k$ out of $N$ constraints from our original problem. The number of removed constraints has a direct impact on the safety probability $(1 - \varepsilon)$. The sample-based optimization of the above problem can therefore be expressed as follows.
\begin{gather*}
    \arg \max_{\bm{\pi}} \mathbb{E} \left[ \sum_{t \sim \tau} \gamma^{t} r(s_t, \textbf{u}_t) \right] \\
    \text{s.t.} \quad V^{B}_{\bm{\pi}}(\tau_i) \leq \omega, \forall i \in \{1, ..., N \} - \mathcal{A}(\{1, ..., N\})
\end{gather*}
where $\mathcal{A}$ is some selection rule. Let $P^N$ be the product of probabilities for $N$ samples. Using the theorem below from \cite{campi2011sampling, singh2024pas}, we can establish the relation between $\varepsilon$ and the confidence parameter, $\beta$, given the values of $N$ and $k$.

\begin{theorem}[\cite{singh2024pas}]
    Given a small confidence parameter $\beta \in (0, 1)$, if $N$ and $k$ are such that,
    \[
    \binom{k+m-1}{k} \sum_{i=0}^{k+m-1} \binom{N}{i} \varepsilon^i (1 - \varepsilon)^{N-i} \leq \beta
    \]
    where m is the number of policy parameters, then $P^N(P(V^{B}_{\bm{\pi}}(\tau_i) > \omega) \leq \varepsilon) \geq (1 - \beta)$.
\end{theorem}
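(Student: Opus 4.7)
The plan is to invoke the scenario-with-discarding framework of Campi and Garatti. I would first recast the sample-based problem as a convex scenario program in the $m$ policy parameters: we retain $N-k$ of the constraints $V^B_{\bm{\pi}}(\tau_i) \leq \omega$ after $\mathcal{A}$ discards $k$ of them, and maximize the empirical return. Let $V(\bm{\pi}) := P(V^B_{\bm{\pi}}(\tau) > \omega)$ be the violation probability of a candidate policy. The goal is to show that, with $P^N$-probability at least $1-\beta$, the optimizer $\bm{\pi}^*$ satisfies $V(\bm{\pi}^*) \leq \varepsilon$, which is exactly the claim.

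The key technical step is a support-set lemma: the optimizer of a convex program in $m$ variables is characterized by at most $m$ \emph{active} constraints (a Helly-type dimensionality argument). Combining these with the $k$ discarded constraints, the solution is a deterministic function of at most $k+m-1$ scenarios in $\{\tau_1,\dots,\tau_N\}$ once one degree of freedom is absorbed into the objective level set. I would therefore proceed as follows. (i) Prove the support-size bound for the direct parameterization used in the preceding theorem. (ii) Fix an index set $I \subset \{1,\dots,N\}$ with $|I|=k+m-1$ and a subset $J \subset I$ with $|J|=k$ indicating the discarded constraints; compute the probability that the support-plus-discard pattern is exactly $(I,J)$ and that simultaneously $V(\bm{\pi}^*)>\varepsilon$. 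Conditioning on the samples indexed outside $I$, those $N-(k+m-1)$ samples must all satisfy the retained constraint, contributing a factor of at most $(1-\varepsilon)^{N-(k+m-1)}$, while the samples in $I$ are unrestricted.

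(iii) Apply a union bound over the $\binom{N}{k+m-1}$ choices of $I$ and the $\binom{k+m-1}{k}$ choices of $J \subset I$. Rearranging the resulting sum and using the standard binomial tail rewrite yields
\[
P^N\bigl(V(\bm{\pi}^*) > \varepsilon\bigr) \;\leq\; \binom{k+m-1}{k}\sum_{i=0}^{k+m-1}\binom{N}{i}\varepsilon^{i}(1-\varepsilon)^{N-i},
\]
and setting the right-hand side $\leq \beta$ gives the claimed confidence bound.

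The main obstacle will be step (i): justifying the support-size bound for the particular convex relaxation induced by the policy-gradient / tabular-direct parameterization, together with the mild non-degeneracy hypothesis that Campi and Garatti require so that the discarding procedure is well defined almost surely. In our setting the constraints $V^B_{\bm{\pi}}(\tau_i)\leq \omega$ are not automatically convex in $\bm{\pi}$, so I would either lift to an appropriately convexified policy polytope (feasible in the direct parameterization $\bm{\pi}\in\Delta(\mathcal{U})^{|\mathcal{S}|}$) or appeal to the generalized scenario theorem for non-convex programs, which preserves the same combinatorial bound at the cost of verifying an abstract "compression" property. Modulo this regularity check, the remainder of the argument is the algebraic manipulation of binomials that produces the displayed inequality verbatim.
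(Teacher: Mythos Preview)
The paper does not prove this theorem at all: it is quoted verbatim as a result from \cite{campi2011sampling,singh2024pas} and then immediately \emph{applied} (the paper only manipulates the explicit bound $k \leq \varepsilon N - m + 1 - \sqrt{2\varepsilon N \ln((\varepsilon N)^{m-1}/\beta)}$ from \cite{campi2011sampling} to argue that $\varepsilon$ is small). So there is no in-paper proof to compare against.

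That said, your proposal is precisely the argument underlying the cited references. The scenario-with-discarding bound of Campi and Garatti is obtained via (i) a Helly-type support-set bound of size at most $m$ for convex programs, (ii) a union bound over the $\binom{N}{k+m-1}\binom{k+m-1}{k}$ ways to choose a support-plus-discard pattern, and (iii) the binomial tail rewrite you state. Your concern about convexity is well placed: in the paper's setting the constraints $V^B_{\bm{\pi}}(\tau_i)\le\omega$ are not convex in the direct parameterization, and the paper simply invokes the theorem without addressing this; a rigorous justification would indeed have to pass through a convex lift or the compression-scheme variant you mention. In short, your sketch is the correct proof of the cited result, and goes further than the paper, which treats the statement as a black box.
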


Using the above theorem and from Equation 8 in \cite{campi2011sampling} we know,
\begin{align*}
    & k \leq \varepsilon N - m + 1 - \sqrt{2 \varepsilon N \ln{\frac{(\varepsilon N)^{m-1}}{\beta}}} \\
    \Rightarrow & \frac{(\varepsilon N)^{m-1}}{\beta} \leq e ^{(k+m-1- \varepsilon N)^2 / (2 \varepsilon N)}
\end{align*}
Let us consider ${(k+m-1- \varepsilon N)^2 / (2 \varepsilon N)} = \psi$. Therefore;
\begin{align*}
    \varepsilon \leq \left( e^{\ln{\beta} / (m - 1)} \times e^{\psi / (m - 1)}\right) / N
\end{align*}
This is an implicit inequality with $\varepsilon$ occurring on both sides. However, we can deduce that, if we consider $m$ to be finite quantity less than $N$, small value for $\beta$ and $k$, the right hand side of the above equation will be very small. Thus $\varepsilon$, which signifies the upper bound on the probability of unsafe trajectories, will be very small.


\section{Empirical Studies}
\label{sec6}

In this section we present empirical evaluations of our method. The primary objective is not to introduce a novel constrained MARL approach but to evaluate the impact of incorporating agent casualties as an implicit constraint into the optimization process. The source code is available in \href{https://github.com/somnathhazra/uncertainties_marl}{GitHub}. Our approach is compared with state-of-the-art Distributional MARL algorithms on the StarCraft II~\cite{samvelyan2019starcraft} and MetaDrive~\cite{li2021metadrive} benchmark environments.

\begin{figure*}[!ht]
    \centering
    \includegraphics[width=.98\linewidth]{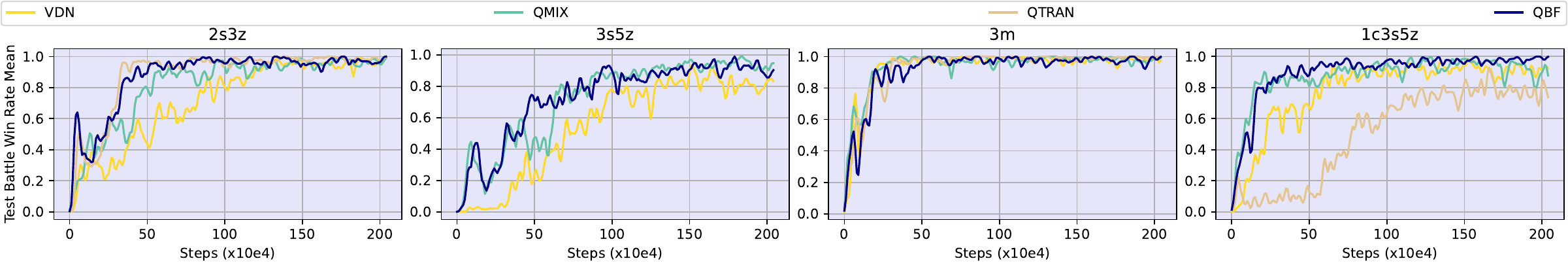}
    \caption{Comparison with the baselines using the StarCraft easy scenarios. Our method is marked as QBF (Q-function evaluation in MARL with a Barrier Function based on agent casualties).}
    \label{fig_res3}
\end{figure*}

\subsection{StarCraft Battle Scenarios}

The StarCarft Multi-Agent Control scenarios include various battle scenarios where allies, controlled by our MARL algorithm, face enemy units controlled by built-in rules. The objective is to maximize the win rate while minimizing ally casualties \cite{samvelyan2019starcraft}. We used the \href{https://github.com/oxwhirl/pymarl}{PyMARL} framework for our experimental setup and training our algorithms. The evaluation metric is the fraction of number of battles won during testing. We derived our barrier function based on the objective of ally casualties and integrate with the reward optimization algorithm to align with the task's objectives. The action space for each of the agent is discrete, thus allowing us to derive distributions over the returns corresponding to each action. We evaluated our approach on two fronts: 
\begin{inparaitem}
    \item firstly we compared with the distributional MARL approaches on hard and super-hard scenarios;
    \item secondly we compared with MARL approaches on easy scenarios.
\end{inparaitem}
For the second case, we did not have distributions over the returns, and could not use them for our policy network as proposed in sub-section \ref{sec4_4}.

\begin{figure}[!h]
    \centering
    \includegraphics[width=0.9\linewidth]{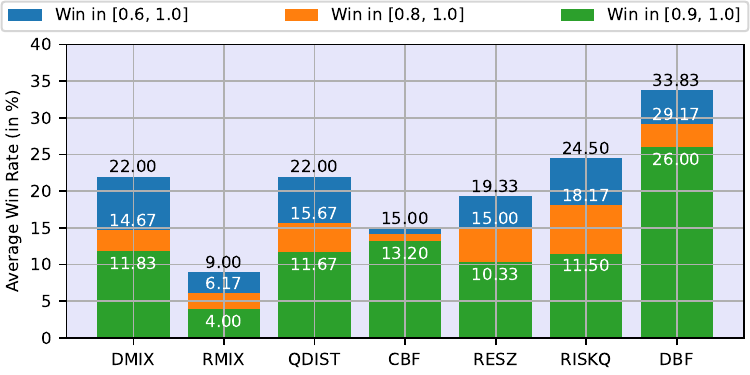}
    \caption{Summary of the evaluations on the StarCraft hard and super-hard scenarios.}
    \label{fig_res2}
\end{figure}

For the first case, we used the following distributional MARL algorithms for comparison:
\begin{inparaitem}
    \item RMIX \cite{qiu2020rmix} uses CVaR with QMIX
    \item DMIX \cite{sun2021dfac} extending QMIX to the distributional setting
    \item QDIST \cite{son2022disentangling} separately analyses the risk measures from the return distributions, based on source of error
    \item CBF \cite{qin2021learning} integrates with DMIX an action correction strategy using a decentralized barrier function approximated using a separate network
    \item RESQ \cite{shen2022resq} uses residual Q-functions
    \item RISKQ \cite{shen2024riskq} uses local risk sensitive action selection policies.
\end{inparaitem}
Our algorithm, referred to as DBF (Distributional MARL with Barrier Function), integrates the barrier loss with DMIX. We used on-policy samples for calculating our barrier function loss \cite{yang2023model}. The results are shown in Figure \ref{fig_res1}. Our algorithm is marked as DBF, which is short for Distributional MARL with a Barrier Function based on agent casualties. Results \ref{fig_res2} indicate DBF achieves a higher fraction of test wins across varying thresholds ($\geq 0.6$, $\geq 0.8$, $\geq 0.9$). From the results, it can be inferred that our approach improves the accuracy of the distributional MARL approaches for the hard and super-hard scenarios. The hard and super-hard scenarios generally contain uneven distribution of agents between the ally and enemy teams; with the enemy team generally containing more number of agents than the ally team. This makes winning battle tougher in these scenarios. We used $\gamma_B = 0.5$ for our experiments. The hyper-parameters used for our experiments are summarized in the Supplementary Material, Section \ref{supp_sec2_3}.

\begin{wrapfigure}{r}{0.6\linewidth}
  \begin{center}
    \includegraphics[width=0.98\linewidth]{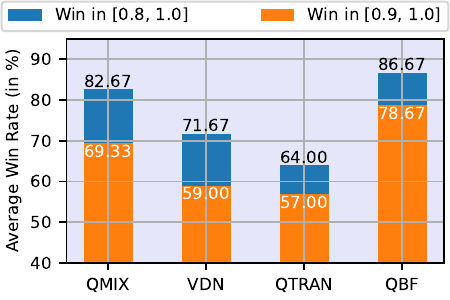}
  \end{center}
  \caption{Summary of evaluations on StarCraft easy scenarios.}
  \label{fig_res4}
\end{wrapfigure}
We also compared our algorithm with traditional MRL approaches:
\begin{inparaitem}
    \item VDN \cite{sunehag2017value}  which uses summation of local returns
    \item QMIX \cite{rashid2020monotonic} uses monotonic factorization of value function
    \item QTRAN \cite{son2019qtran} uses transformers for factorization.
\end{inparaitem}
The traditional MARL approaches have been demonstrated to perform well for the easy scenarios in StarCraft. The easy scenarios contain equal number of ally and enemy agents with similar types of agents for both groups. The action space is discrete here. We integrated our barrier function loss with the QMIX algorithm for evaluation here. The results are shown in figures \ref{fig_res3} and \ref{fig_res4}. Our algorithm is marked as QBF, which stands for Q-function evaluation in MARL with a Barrier Function based on agent casualties. From the evaluations it is evident that integrating the barrier loss improves the performance of the baseline algorithm, albeit marginally. This experiment also highlights the importance of the hyper-network used in the local policy network, explained in Section \ref{sec4_4}.

\subsection{MetaDrive Simulator}

\begin{figure}[!ht]
    \centering
    \includegraphics[width=0.9\linewidth]{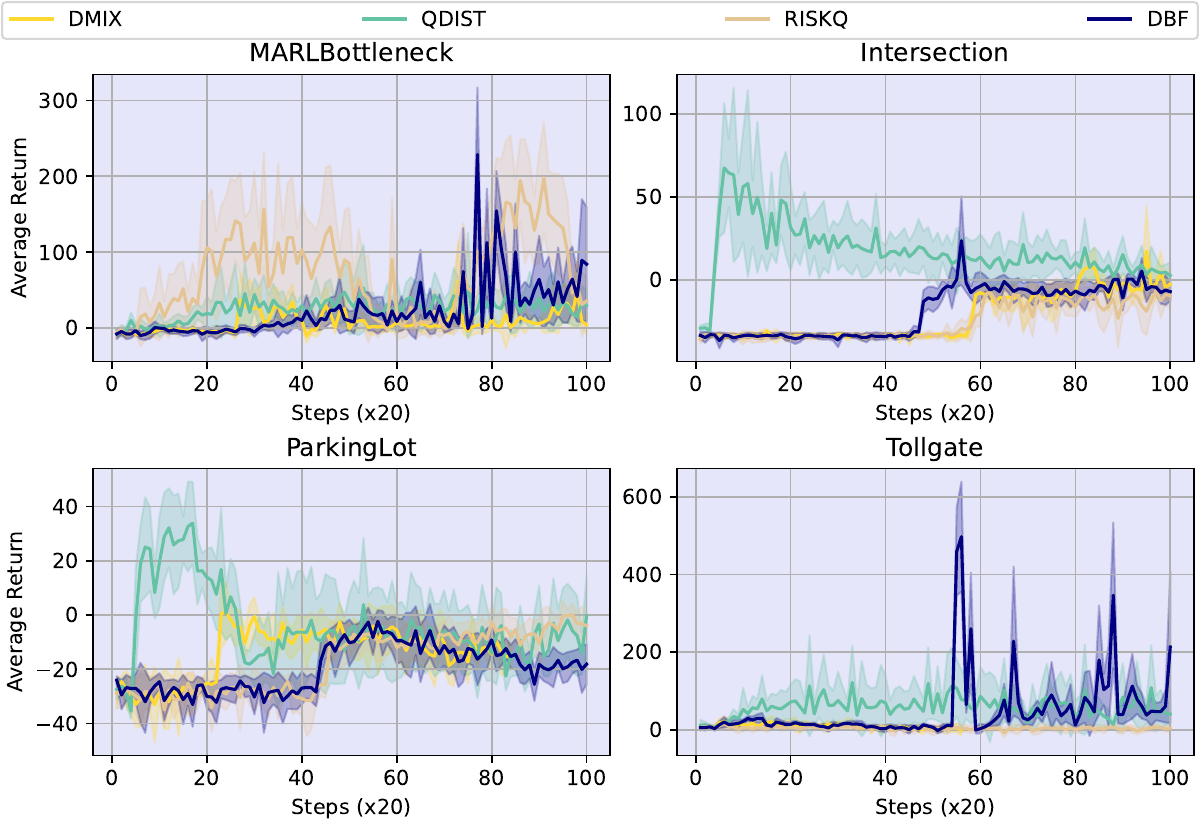}
    \caption{Evaluation results on the Metadrive MARL environments.}
    \label{fig_res7}
\end{figure}

MetaDrive~\cite{li2021metadrive} is a lightweight and realistic driving simulator featuring various multi-agent cooperative driving tasks with decentralized rewards. It presents the challenge of controlling multiple cars in some predefined scenarios. Since the simulator does not return any global state; for training certain algorithms, we used the collective observation vector as the global state. Each agent (vehicle) can execute discrete actions, with crashes or road exits resulting in termination. We did not allow respawn of vehicles for the terminated vehicles. Episodes were terminated when more than half the agents are eliminated during training or when any agent reached termination during evaluation. We used 10 agents for each environment for training.

For our experiments, we used a global reward computed by summing individual agent rewards to align with the POMDP structure. Training spanned 2000 episodes with a replay buffer holding the most recent 1000 episodes. On-policy samples were used for barrier loss calculation. The trained policy was evaluated on 10 episodes at each evaluation step. As shown in Fig.~\ref{fig_res7}, our approach demonstrated strong performance compared to baseline algorithms across scenarios. Key hyper-parameters include $\gamma = 0.99$, $\gamma_B = 0.5$, and the Adam optimizer. Additional experimental details are provided in the Supplementary Material, Section \ref{supp_sec2_1}.

\subsection{Ablation Studies}

\begin{figure}[!ht]
    \centering
    \includegraphics[width=0.9\linewidth]{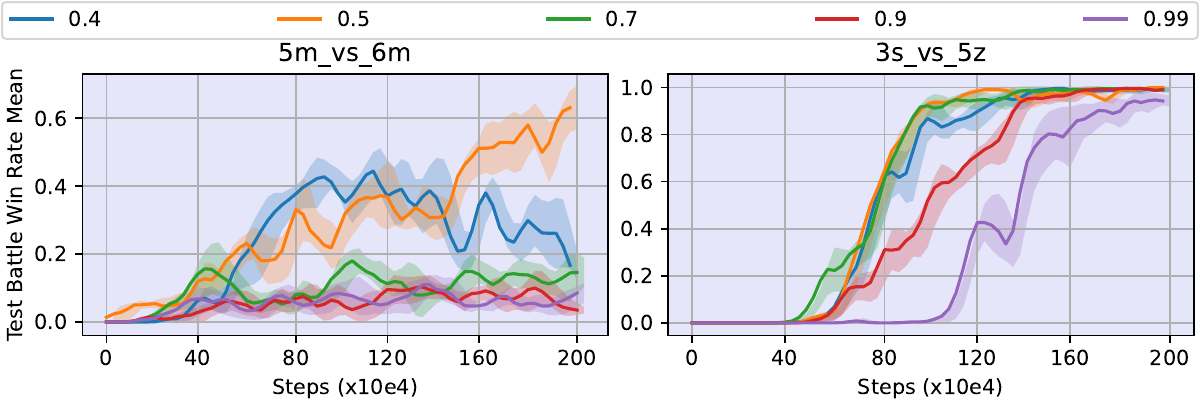}
    \caption{Ablation Study on $\gamma_B$ tested on StarCraft.}
    \label{fig_res5}
\end{figure}

In our experiments, for the discount factor of barrier function, we had used $\gamma_B=0.5$. We had tested various values for the hyper-parameter on two StarCraft scenarios before using them in the rest of our evaluations. The hyper-parameter dictates the creadit assignment horizon length based on the fault metric, i.e., number of agent terminations. From the results in Figure \ref{fig_res5}, it is evident that the algorithm performs well for $\gamma_B=0.5$. The tested values for $\gamma_B$ were 0.4, 0.5, 0.7, 0.9, 0.99.

\begin{figure}
    \centering
    \includegraphics[width=0.95\linewidth]{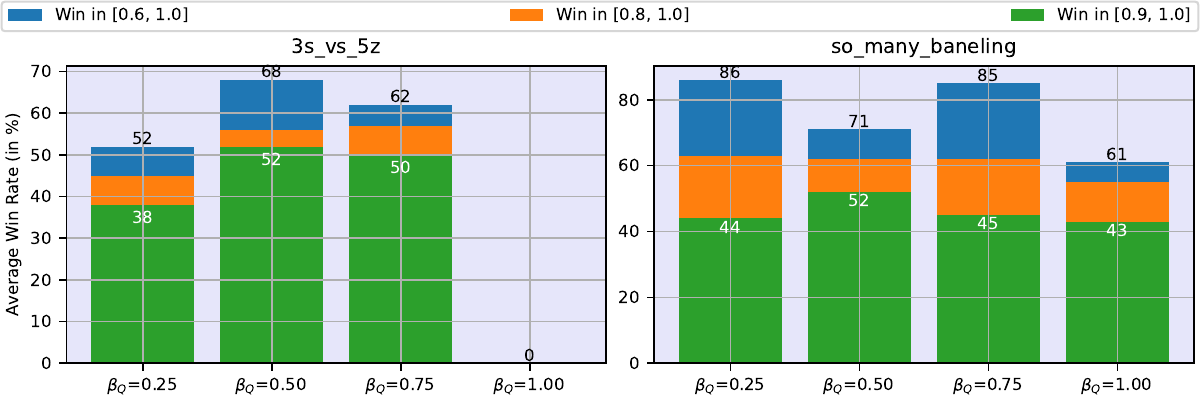}
    \caption{Ablation Study on different values of $\beta_B$ and $\beta_Q$ on StarCraft environments.}
    \label{fig_res6}
\end{figure}

We also conducted an Ablation Study on adjustment of weights of the gradients, $\beta_Q$ and $\beta_B$ on two StarCraft environments. The results are shown in Figure \ref{fig_res6}. Here we only show scores for $\beta_Q$, and $\beta_B = 1 - \beta_Q$. The bars in the figure indicate the percentage of evaluations for which the win-rate was >=0.6, >=0.8, or >=0.9. From the results it is evident that setting equivalent weights for both the gradients, i.e., $\beta_Q = \beta_B = 0.5$, results in better performance for most scenarios, rather than weighing more on any one of the gradients. We used this result for the rest of our evaluations.


\section{Conclusion}

In this paper, we proposed a novel approach to distributional multi-agent reinforcement learning (MARL) that integrates fault toleration based on an agent safety boundary (based on intrinsic environment dynamics) with effective reward optimization. By incorporating a constraint-based formulation to model agent termination as a crucial learning signal, our method addresses the issue of unsafe actions leading to agent failures, thus improving the stability of MARL in environments where agent survivability is critical. We extended our approach for both traditional MARL value factorization techniques and distributional MARL approaches since the safety constraint is not limited to the distributional analysis. We also suggested an improvement in the local policy network to utilize the predicted distributions over returns.

The integration of these techniques using the PCGrad optimizer resulted in an effective hybrid loss formulation, combining the distributional Huber-quantile loss with our barrier function based safety loss. This allowed us to strike a balance between optimizing agent performance and ensuring system-wide safety. Our experimental results, particularly on complex environments such as StarCraft II micromanagement tasks, and MetaDrive multi-agent driving scenarios, demonstrate that our approach performs comparatively well as compared with the state-of-the-art MARL methods in terms of both convergence and policy robustness. We also theoretically analysed the convergence and safety verification of our approach. The approach can be further be integrated with other constrained MARL approaches to test its robustness in environments with extrinsic safety constraints.
 


\begin{acks}
The authors acknowledge partial support from AI4CPS IIT Kharagpur grant no TRP3RDTR001 and Ericsson India for this work.
\end{acks}



\bibliographystyle{ACM-Reference-Format} 
\bibliography{ref}


\newpage

\appendix


\section{Algorithm}

In Algorithm \ref{algo1} below, we outline the steps taken for our policy update. Please note that the loss from the barrier function is calculated using on-policy samples.


\begin{algorithm}[!htbp]
    \caption{Policy optimization with barrier function constraint}
    \label{algo1}
    \textbf{Input:} Initial policy $\bm{\pi}$ with weights $m_0$\; \\
    \textbf{Output:} Trained policy weights $m_{\text{out}}$ \; \\
    \begin{algorithmic}[1]
        \FOR{$t=0, ..., T-1$}
            \STATE Policy evaluation under $\bm{\pi_{m_t}}$\;
            \STATE Sample trajectories from replay buffer\;
            \STATE \# Policy update calculation \;
            \STATE Compute TD error from sampled trajectories\;
            \STATE Compute barrier function over the sampled trajectories\;
            \IF{$V^B_{\pi_{m_t}} > \omega$}
                \STATE \# Constraint violation
                \STATE Compute barrier function loss (Equation 5)\;
            \ENDIF
            \STATE \# Gradient correction\;
            \STATE Compute PCGrad from the loss functions\;
            \STATE Update weights: $m_t \rightarrow m_{t+1}$\; \\
            $m_{t+1} = m_t - \alpha \cdot g_1$ ($\theta>90^{\circ}$) = $m_t - \alpha \cdot g_2$ ($\theta \leq 90^{\circ}$)\\
        \ENDFOR
    \end{algorithmic}
    \textbf{Return:} Trained policy $\bm{\pi}_{m_{\text{out}}}$ \;
\end{algorithm}
The output of the algorithm is the final policy. Although, for simplicity, we have used $\bm{\pi}$ to represent the collective policy for all the agents combined, we actually obtain local policies for decentralized evaluation of the agents.


\section{Empirical Details}
\label{supp_sec2}

\subsection{The StarCraft System}
\label{supp_sec2_1}

The StarCraft system \cite{samvelyan2019starcraft} consists of scenarios that presents the challenge to control multiple cooperative (ally) agents to defeat a group of enemy agents, which are controlled using built-in game AI, in a war simulation. The goal of the ally agents is to win battles against the enemy units across multiple simulation episodes. Each episode ends when all the units of either one among the ally or enemy group dies, or a certain number of timesteps are reached, in which case the battle is considered as a loss for the ally units. There are various types of agents in the system which takes on different roles, and each scenario may include multiple types of units in both the ally and the enemy groups. We explain some components here as an overview, that are of importance to us.

\subsubsection{Observation Space}

SMAC provides a global observation which summarises the environment state including all the participating agents (allies and enemies). It also provides a local observation which also includes the information about all the agents but from the local agent's perspective whose observation is being calculated. Since we use the QMIX algorithm as our backbone, we used both the global and local observations. Each information about other agents in the observation vector includes information about relative position of the agent from the current agent, relative distance of the agent from the current agent, agent type, and some other information. This presents a possibility of analysing importance of each environmental entity for whiich we use the hyper-network. The agents may have different roles to play in the battle, depending on the type of agent. The observation space is continuous, the values are each dimension are real numbers.

\subsubsection{Action Space}

The action space is discrete, where each agent locally derives its action from their local policy $\pi_i$. The actions that can be taken by each agent include move actions, attack actions, stop actions, etc. The attack actions should be directed at a certain agent. This calls for analysis on the distribution of return based on the actions taken, since the return from the action can be realistically captured by a distributional analysis.

\subsection{The MetaDrive Environments}

MetaDrive \cite{li2021metadrive} is a driving simulator with accurate physics simulation that consists of multi-agent scenarios where the task is to control multiple cars in various traffic scenarios, such as Parking Lot, Tollgate, etc. For training our algorithms, in this work we generate 10 agents for each environment. We used high-level scene information as our local observation, which encapsulates nearby vehicle information also. The environments do not separately provide any global information; hence we used stacked local observation as global state information. We used discrete action space for enabling distributional analysis. The action space consists of 25 discrete actions, which is combination of steering and throttle inputs. The reward is a combination of a sparse reward for reaching the goal state, and a dense reward for local guidance. We combined the local rewards available to form the global reward in order to obey the POMDP structure. Although there is a cost function available for the scenarios, we calculated the barrier function based on agent terminations as explained before.

\subsection{Hyper-parameters used}
\label{supp_sec2_3}

\begin{table}[!hb]
    \centering
\caption{Hyper-parameters used for our experiments}
\label{tab_supp1}
    \begin{tabular}{cc}
    \hline
         Hyper-parameter& Value\\
    \hline
         action selector& epsilon greedy\\
 epsilon start&1.0\\
 epsilon end&0.05\\
 batch size&8\\
 buffer size&5000\\
 hidden dimension&64\\
 RNN hidden dimension&64\\
 mixer&DMIX\\
 learning rate&0.001\\
 TD $\lambda$&0.6\\
 optimizer&Adam\\
 \hline
    \end{tabular}

\end{table}

\subsection{Computing Hardware}

We conducted our experiments on a 8th generation Intel(R) Core(TM) i7-9700K CPU @ 3.60GHz processor with 66GB System memory. To boost training of the neural architectures, we used a NVIDIA TITAN RTX GPU with 24GB graphic memory.


\onecolumn
\section{Proof of Theorem 5.1}
\label{supp_sec3}

Before going into the details, we would like to clarify that for simplicity, we use $\bm{\pi}$ to refer to the collective policy, although we use decentralized policies for evaluation. The convergence of the underlying TD learning algorithm is studied in detail in this text \cite{wang2021towards}. Our algorithm bears resemblance with the policy learning algorithm studied by the authors of CRPO \cite{xu2021crpo}. Along with TD error, they consider approximation errors in estimation of Q-value, which is a very realistic assumption for the multi-agent setup. Hence we refer to their analysis for our derivation in this text. We start with the performance difference lemma, that we use for expressing the performance bound over the policy improvement process.

\begin{lemma}[Performance Difference Lemma \cite{kakade2002approximately}]
    For any two policies $\pi$ and $\pi'$ and any start state distribution, $\rho$,
    \begin{equation}
        V_{\pi}(s_0) - V_{\pi'}(s_0) = \frac{1}{1-\gamma} \mathbb{E}_{s \sim d_\rho} \mathbb{E}_{\textbf{u} \sim \pi} \left[ A_{\pi'}(s, \textbf{u}) \right]
    \end{equation}
    where $V_{\pi}(s_0)$ denotes the accumulated reward over the trajectory with $s_0$ sampled from $\rho$, and $d_\rho$ denotes the state-action visitation distribution under policy $\bm{\pi}$.
    \label{supp_lem1}
\end{lemma}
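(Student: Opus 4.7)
The plan is to follow the classical Kakade--Langford argument, rewriting $V_{\pi}(s_0)-V_{\pi'}(s_0)$ as a telescoping sum of one-step Bellman residuals along trajectories generated under $\pi$, and then repackaging the discounted time-sum as a single expectation under the discounted state visitation distribution $d_\rho$.

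First I would expand $V_{\pi}(s_0)=\mathbb{E}_{\tau\sim\pi\mid s_0}\!\left[\sum_{t=0}^{\infty}\gamma^{t}r(s_t,\textbf{u}_t)\right]$, where $\tau=(s_0,\textbf{u}_0,s_1,\textbf{u}_1,\dots)$ is generated with $\textbf{u}_t\sim\pi(\cdot\mid s_t)$ and $s_{t+1}\sim P(\cdot\mid s_t,\textbf{u}_t)$. The central identity I would exploit is the deterministic telescope
\begin{equation*}
V_{\pi'}(s_0)=\sum_{t=0}^{\infty}\gamma^{t}\bigl(V_{\pi'}(s_t)-\gamma V_{\pi'}(s_{t+1})\bigr),
\end{equation*}
valid path-wise because $\gamma<1$ and rewards are bounded, so $\gamma^{t}V_{\pi'}(s_t)\to 0$. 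Since this holds along every sample path, I can insert it inside the expectation $\mathbb{E}_{\tau\sim\pi\mid s_0}[\,\cdot\,]$ without changing its value, obtaining
\begin{equation*}
V_{\pi}(s_0)-V_{\pi'}(s_0)=\mathbb{E}_{\tau\sim\pi\mid s_0}\!\left[\sum_{t=0}^{\infty}\gamma^{t}\bigl(r(s_t,\textbf{u}_t)+\gamma V_{\pi'}(s_{t+1})-V_{\pi'}(s_t)\bigr)\right].
\end{equation*}

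Next I would collapse each summand using the Bellman expectation identity: conditioning on $(s_t,\textbf{u}_t)$ and taking the inner expectation over $s_{t+1}\sim P(\cdot\mid s_t,\textbf{u}_t)$ yields $r(s_t,\textbf{u}_t)+\gamma\,\mathbb{E}[V_{\pi'}(s_{t+1})\mid s_t,\textbf{u}_t]=Q_{\pi'}(s_t,\textbf{u}_t)$, so each bracketed term reduces to $Q_{\pi'}(s_t,\textbf{u}_t)-V_{\pi'}(s_t)=A_{\pi'}(s_t,\textbf{u}_t)$. Hence $V_{\pi}(s_0)-V_{\pi'}(s_0)=\mathbb{E}_{\tau\sim\pi\mid s_0}\!\left[\sum_{t=0}^{\infty}\gamma^{t}A_{\pi'}(s_t,\textbf{u}_t)\right]$.

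Finally, I would convert the discounted time-sum into a visitation expectation by defining $d_\rho(s)=(1-\gamma)\sum_{t=0}^{\infty}\gamma^{t}\Pr_{\pi,\rho}(s_t=s)$ and integrating over $s_0\sim\rho$; the prefactor $(1-\gamma)^{-1}$ comes out of the normalization, producing the claimed expression $\tfrac{1}{1-\gamma}\,\mathbb{E}_{s\sim d_\rho}\mathbb{E}_{\textbf{u}\sim\pi}[A_{\pi'}(s,\textbf{u})]$. The only technicality is justifying the interchange of the infinite sum with $\mathbb{E}_{\tau\sim\pi}$ and the telescoping limit, which follows from dominated convergence using bounded rewards and $\gamma<1$; I do not anticipate a genuine obstacle, as the result is essentially a bookkeeping exercise once the telescope is in place.
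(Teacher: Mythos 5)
Your proof is correct and is the standard Kakade--Langford argument: telescope $V_{\pi'}$ along trajectories drawn from $\pi$, collapse each one-step residual to $A_{\pi'}(s_t,\textbf{u}_t)$ via the Bellman identity, and repackage the discounted time-sum as an expectation under the normalized visitation distribution $d_\rho$, which is where the $\tfrac{1}{1-\gamma}$ prefactor originates. The paper itself does not prove this lemma --- it imports it by citation from Kakade and Langford --- so there is nothing to compare against beyond noting that your derivation is the canonical one and that your handling of the interchange of sum and expectation (dominated convergence under bounded rewards and $\gamma<1$) is the right justification.
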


As in CRPO \cite{xu2021crpo}, any of the policy optimization updates can be used for our setup. Since the specifics of the MARL setup are not required here, to simplify, this section can be better understood by looking at our setup from the point of view of a single global tabular policy learning, $\bm{\pi}$. As is generally done in MARL algorithms, we consider direct policy parameterization with parameters $m \in \Delta(\mathcal{U})^{|\mathcal{S}|}$ in a tabular setting \cite{agarwal2021theory}. Let $\alpha$ represent the learning rate for policy gradient approach.

\begin{lemma}[Performance Improvement Bound]
    For the iterates $\bm{\pi}_{t}$ in the tabular setting and for initial state distribution $\rho$, we have,
    \begin{align*}
        V_{t+1}(s_0) - V_{t}(s_0) \geq
        & \frac{1 - \gamma}{\alpha} \mathbb{E}_{s \sim \rho} \left( \frac{\alpha}{1 - \gamma} \bar{V}_t(s) - \frac{\alpha}{1 - \gamma}V_{t}(s) + \frac{\alpha}{1 - \gamma} \sum_{\textbf{u} \in \mathcal{U}}\bm{\pi}_{t}(s) \left| \bar{Q}_t(s, \textbf{u}) - Q_t(s, \textbf{u}) \right| \right)
        \\
        & - \frac{1}{1 - \gamma} \mathbb{E}_{s \sim d_{\rho}} \sum_{\textbf{u} \in \mathcal{U}}\bm{\pi}_{t}(s) \left| \bar{Q}_t(s, \textbf{u}) - Q_t(s, \textbf{u}) \right|
        - \frac{1}{1 - \gamma} \mathbb{E}_{s \sim d_{\rho}} \sum_{\textbf{u} \in \mathcal{U}}\bm{\pi}_{t+1}(s) \left| Q_t(s, \textbf{u}) - \bar{Q}_t(s, \textbf{u}) \right|
    \end{align*}
    where $\bm{\pi}_{m_t} = \bm{\pi}_t$, $V_{\bm{\pi}_{m_t}} = V_t$, $Q_{\bm{\pi}_{m_t}} = Q_t$, $\bar{Q}_t(s, \textbf{u})$ is the approximated Q-function estimated using the policy $\bm{\pi}_{m_t}$, and $\bar{V}_t(s) = \sum_{\textbf{u} \in \mathcal{U}} \bar{Q}_t(s, \textbf{u})$.
    \label{supp_lem2}
\end{lemma}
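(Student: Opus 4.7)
The plan is to start from the Performance Difference Lemma (the performance difference identity stated earlier) applied to the consecutive iterates $\bm{\pi}_{t+1}$ and $\bm{\pi}_t$, which gives
\[
V_{t+1}(s_0) - V_t(s_0) = \frac{1}{1-\gamma}\mathbb{E}_{s \sim d_\rho^{t+1}}\mathbb{E}_{\textbf{u} \sim \bm{\pi}_{t+1}}\bigl[Q_t(s,\textbf{u}) - V_t(s)\bigr].
\]
First I would write $Q_t = \bar{Q}_t + (Q_t - \bar{Q}_t)$ and lower-bound the second piece by $-|Q_t - \bar{Q}_t|$. Averaging under $\bm{\pi}_{t+1}$, scaling by $1/(1-\gamma)$, and integrating against $d_\rho^{t+1}$ yields the third (negative) error term appearing in the claim.

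Next I would decompose $\bar{Q}_t(s,\textbf{u}) - V_t(s) = [\bar{Q}_t(s,\textbf{u}) - \bar{V}_t(s)] + [\bar{V}_t(s) - V_t(s)]$. The first bracket, averaged under $\bm{\pi}_{t+1}(s)$, is pointwise non-negative in $s$ because the NPG step in the tabular direct-parameterization setting moves probability mass toward actions of larger $\bar{Q}_t$; this yields the ascent inequality $\mathbb{E}_{\textbf{u} \sim \bm{\pi}_{t+1}(s)}[\bar{Q}_t(s,\textbf{u})] \geq \bar{V}_t(s)$. Since that integrand is non-negative I can apply the standard bound $d_\rho^{t+1}(s) \geq (1-\gamma)\rho(s)$ to change measure to $\rho$, which absorbs the leading $1/(1-\gamma)$ factor, and then discard the remaining non-negative slack as a valid lower bound.

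For the second bracket, observe that $\bar{V}_t(s) - V_t(s) = \mathbb{E}_{\textbf{u} \sim \bm{\pi}_t(s)}[\bar{Q}_t - Q_t]$, so adding the pointwise correction $\sum_{\textbf{u}} \bm{\pi}_t(\textbf{u}|s)\,|\bar{Q}_t - Q_t|$ produces a sign-definite quantity and the same change of measure applies. The surviving positive contribution then gives the first group of terms $\mathbb{E}_{s \sim \rho}\bigl[\bar{V}_t - V_t + \sum_{\textbf{u}} \bm{\pi}_t(s)\,|\bar{Q}_t - Q_t|\bigr]$ of the claim, while the residue from this correction appears as the second negative $1/(1-\gamma)$-scaled error term evaluated under $(d_\rho, \bm{\pi}_t)$.

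The main obstacle is the disciplined bookkeeping required so that every instance of the change-of-measure inequality $d_\rho^{\bm{\pi}}(s) \geq (1-\gamma)\rho(s)$ is used only on sign-definite integrands; any signed residue has to be cancelled against an $|\bar{Q}_t - Q_t|$ absolute-value correction before the bound can be applied. A secondary subtlety is making the pointwise NPG ascent inequality explicit for the direct-parameterization step on the simplex, which follows from the closed-form update but must be invoked carefully. Once these pieces are in place, collecting the three contributions and recognising that the prefactor $(1-\gamma)/\alpha \cdot \alpha/(1-\gamma) = 1$ matches the outer scaling of the first group in the claim produces the stated bound.
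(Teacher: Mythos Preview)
Your proposal is correct and follows essentially the same route as the paper: both start from the Performance Difference Lemma applied to $\bm{\pi}_{t+1}$ and $\bm{\pi}_t$, insert $\bar{Q}_t$ via $Q_t = \bar{Q}_t + (Q_t - \bar{Q}_t)$, add and subtract the $\bm{\pi}_t$-weighted absolute error $\sum_{\textbf{u}}\bm{\pi}_t|\bar{Q}_t - Q_t|$, and finish with the change-of-measure bound $d_\rho \geq (1-\gamma)\rho$ on the sign-definite pieces. Your write-up is in fact more explicit than the paper's on two points the paper leaves implicit or typo-ridden: the pointwise NPG ascent inequality $\mathbb{E}_{\textbf{u}\sim\bm{\pi}_{t+1}}[\bar{Q}_t]\geq \bar{V}_t$ that justifies the non-negativity needed for the measure change, and the requirement that the change of measure only be applied to non-negative integrands.
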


\begin{proof}
    From Lemma \ref{supp_lem1}, we know that,
    \begin{align*}
        V_{t+1}(s_0) - V_{t}(s_0) & = \frac{1}{1-\gamma} \mathbb{E}_{s \sim d_\rho} \sum_{\textbf{u} \in \mathcal{U}} \bm{\pi}_{t+1} A_{t}(s, \textbf{u}) 
        \\
        & = \frac{1}{1-\gamma} \mathbb{E}_{s \sim d_\rho} \sum_{\textbf{u} \in \mathcal{U}} \bm{\pi}_{t+1} Q_{t}(s, \textbf{u}) - \frac{1}{1-\gamma} \mathbb{E}_{s \sim d_\rho} V_t(s)
        \\
        & = \frac{1}{1-\gamma} \mathbb{E}_{s \sim d_\rho} \sum_{\textbf{u} \in \mathcal{U}} \bm{\pi}_{t+1} \bar{Q}_{t}(s, \textbf{u}) + \frac{1}{1-\gamma} \mathbb{E}_{s \sim d_\rho} \sum_{\textbf{u} \in \mathcal{U}} \bm{\pi}_{t+1} \left( Q_{t}(s, \textbf{u}) - \bar{Q}_{t}(s, \textbf{u}) \right) - \frac{1}{1-\gamma} \mathbb{E}_{s \sim d_\rho} V_t(s)
        \\
        & = \frac{1}{1-\gamma} \mathbb{E}_{s \sim d_\rho} \bar{V}_{t}(s, \textbf{u}) + \frac{1}{1-\gamma} \mathbb{E}_{s \sim d_\rho} \sum_{\textbf{u} \in \mathcal{U}} \bm{\pi}_{t+1} \left( Q_{t}(s, \textbf{u}) - \bar{Q}_{t}(s, \textbf{u}) \right) - \frac{1}{1-\gamma} \mathbb{E}_{s \sim d_\rho} V_t(s)
        \\
        & = \frac{1}{1-\gamma} \mathbb{E}_{s \sim d_\rho} \bar{V}_{t}(s, \textbf{u}) + \frac{1}{1-\gamma} \mathbb{E}_{s \sim d_\rho} \sum_{\textbf{u} \in \mathcal{U}} \bm{\pi}_{t+1} \left( Q_{t}(s, \textbf{u}) - \bar{Q}_{t}(s, \textbf{u}) \right) - \frac{1}{1-\gamma} \mathbb{E}_{s \sim d_\rho} V_t(s)
        \\
        & = \frac{1}{1-\gamma} \mathbb{E}_{s \sim d_\rho} \bar{V}_{t}(s, \textbf{u}) + \frac{1}{1-\gamma} \mathbb{E}_{s \sim d_\rho} \sum_{\textbf{u} \in \mathcal{U}} \bm{\pi}_{t+1} \left( Q_{t}(s, \textbf{u}) - \bar{Q}_{t}(s, \textbf{u}) \right) - \frac{1}{1-\gamma} \mathbb{E}_{s \sim d_\rho} V_t(s) 
        \\
        & \quad + \frac{1}{1-\gamma} \mathbb{E}_{s \sim d_\rho} \sum_{\textbf{u} \in \mathcal{U}} \bm{\pi}_{t} \left| \bar{Q}_{t}(s, \textbf{u}) - Q_{t}(s, \textbf{u}) \right|  - \frac{1}{1-\gamma} \mathbb{E}_{s \sim d_\rho} \sum_{\textbf{u} \in \mathcal{U}} \bm{\pi}_{t} \left| \bar{Q}_{t}(s, \textbf{u}) - Q_{t}(s, \textbf{u}) \right|
        \\
         & \geq \frac{1}{\alpha} \mathbb{E}_{s \sim d_\rho} \left( \frac{\alpha}{1-\gamma} \bar{V}_{t}(s, \textbf{u}) - \frac{\alpha}{1-\gamma} V_t(s) + \frac{\alpha}{1-\gamma} \sum_{\textbf{u} \in \mathcal{U}} \bm{\pi}_{t} \left| \bar{Q}_{t}(s, \textbf{u}) - Q_{t}(s, \textbf{u}) \right| \right)
         \\
         & \quad - \frac{1}{1-\gamma} \mathbb{E}_{s \sim d_\rho} \sum_{\textbf{u} \in \mathcal{U}} \bm{\pi}_{t} \left| \bar{Q}_{t}(s, \textbf{u}) - Q_{t}(s, \textbf{u}) \right|  - \frac{1}{1-\gamma} \mathbb{E}_{s \sim d_\rho} \sum_{\textbf{u} \in \mathcal{U}} \bm{\pi}_{t+1} \left| Q_{t}(s, \textbf{u}) - \bar{Q}_{t}(s, \textbf{u}) \right|
         \\
         & \geq \frac{1 - \gamma}{\alpha} \mathbb{E}_{s \sim d_\rho} \left( \frac{\alpha}{1-\gamma} \bar{V}_{t}(s, \textbf{u}) - \frac{\alpha}{1-\gamma} V_t(s) + \frac{\alpha}{1-\gamma} \sum_{\textbf{u} \in \mathcal{U}} \bm{\pi}_{t} \left| \bar{Q}_{t}(s, \textbf{u}) - Q_{t}(s, \textbf{u}) \right| \right)
         \\
         & \quad - \frac{1}{1-\gamma} \mathbb{E}_{s \sim d_\rho} \sum_{\textbf{u} \in \mathcal{U}} \bm{\pi}_{t} \left| \bar{Q}_{t}(s, \textbf{u}) - Q_{t}(s, \textbf{u}) \right|  - \frac{1}{1-\gamma} \mathbb{E}_{s \sim d_\rho} \sum_{\textbf{u} \in \mathcal{U}} \bm{\pi}_{t+1} \left| Q_{t}(s, \textbf{u}) - \bar{Q}_{t}(s, \textbf{u}) \right|
    \end{align*}
    This concludes our proof.
\end{proof}

It is to be noted that since we are using a gradient manipulation step, the improvement bound at each step is multiplied by the constant used for weighing the gradients. However, since it is a constant, and here we are calculating the time-complexity for simplicity we have removed the weight from our calculation. Also, since $\bm{\pi}$ is not only a function of $s$, we refrain from using $\bm{\pi}(s)$ throughout the text.

\begin{lemma}[Upper Bound on the Optimality Gap]
     Consider the policy gradient updates in the tabular setting, we have,
     \begin{align*}
         V_*(s_0) - V_t(s_0) \leq \frac{2 \alpha v^2_{\texttt{max}} |\mathcal{S}| |\mathcal{U}|}{(1 - \gamma)^3} + \frac{3(1 + \alpha v_{\texttt{max}})}{(1 - \gamma)^2} \left\lVert Q_t - \bar{Q}_t \right\rVert_2 
     \end{align*}
     where $V_*(s_0) = V_{\bm{\pi}^*}(s_0)$, and the reward $r \in [0, v_{\texttt{max}}]$. 
     \label{supp_lem3}
\end{lemma}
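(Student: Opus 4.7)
The plan is to follow the template of the CRPO analysis (Xu et al., 2021), which handles policy optimization under Q-function approximation errors. I would start from the Performance Difference Lemma (Lemma~\ref{supp_lem1}) applied with the optimal policy $\bm{\pi}_*$ and the current iterate $\bm{\pi}_t$:
\begin{equation*}
V_*(s_0) - V_t(s_0) = \frac{1}{1-\gamma}\, \mathbb{E}_{s \sim d_{\rho}^{\bm{\pi}_*}} \sum_{\textbf{u} \in \mathcal{U}} \bm{\pi}_*(\textbf{u}\mid s)\, A_t(s, \textbf{u}).
\end{equation*}
My first step is to swap the true advantage $A_t$ for the surrogate $\bar{A}_t := \bar{Q}_t - \bar{V}_t$ by adding and subtracting $\bar{Q}_t$ and $\bar{V}_t$. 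This produces a clean term in $\bar{A}_t$ plus a residue of the form $\mathbb{E}_{s \sim d_\rho^{\bm{\pi}_*}} \sum_{\textbf{u}} \bm{\pi}_* \,\lvert Q_t - \bar{Q}_t\rvert$, which by Cauchy--Schwarz and the standard change-of-measure bound ($\lVert d_\rho^{\bm{\pi}_*}/\rho \rVert_\infty \le 1/(1-\gamma)$) contributes one factor of $\lVert Q_t - \bar{Q}_t\rVert_2 /(1-\gamma)^2$ to the right-hand side.

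For the clean term involving $\bar{A}_t$, my plan is to invert the Performance Improvement Bound (Lemma~\ref{supp_lem2}). Rearranging that inequality gives
\begin{equation*}
\mathbb{E}_{s \sim \rho}\!\left[\bar{V}_t(s) - V_t(s)\right] \le \frac{\alpha}{1-\gamma}\bigl(V_{t+1}(s_0) - V_t(s_0)\bigr) + \alpha\,(\text{error terms in }\lVert Q_t - \bar{Q}_t\rVert_2),
\end{equation*}
and applied against $\bm{\pi}_*$ rather than $\bm{\pi}_t$ via a symmetric argument, it bounds the clean term by a one-step improvement plus approximation residues. Under direct tabular parameterization with step size $\alpha$, each coordinate of the update has magnitude at most $\alpha v_{\texttt{max}}/(1-\gamma)$ and there are $|\mathcal{S}||\mathcal{U}|$ coordinates, so a standard second-order expansion of the one-step improvement yields precisely the $\frac{2\alpha v_{\texttt{max}}^2 |\mathcal{S}||\mathcal{U}|}{(1-\gamma)^3}$ term, which is the leading single-step contribution appearing in the statement.

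The main obstacle will be assembling the three appearances of $\lVert Q_t - \bar{Q}_t\rVert_2$ so that they collapse into the stated coefficient $3(1+\alpha v_{\texttt{max}})/(1-\gamma)^2$: one copy from converting $A_t$ to $\bar{A}_t$ in the Performance Difference Lemma, and two copies that are already embedded in Lemma~\ref{supp_lem2} (one expectation under $\bm{\pi}_t$ and one under $\bm{\pi}_{t+1}$). The factor $(1 + \alpha v_{\texttt{max}})$ in particular should emerge from a change-of-measure step that rewrites the expectation under $\bm{\pi}_{t+1}$ as one under $\bm{\pi}_t$, using the fact that $\lVert \bm{\pi}_{t+1}(\cdot\mid s) - \bm{\pi}_t(\cdot\mid s)\rVert_1 = O(\alpha v_{\texttt{max}})$ per state (a direct consequence of projecting a bounded gradient step onto the simplex). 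Once Cauchy--Schwarz converts the $\ell_1$ pairings against policies into $\lVert Q_t - \bar{Q}_t\rVert_2$, the three residues combine with the quadratic one-step estimate from the previous paragraph to give the claimed bound.
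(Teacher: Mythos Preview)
Your skeleton matches the paper exactly: start from the Performance Difference Lemma with $\bm{\pi}^*$ versus $\bm{\pi}_t$, add and subtract $\bar{Q}_t$ to isolate one residue under $\bm{\pi}^*$, then feed the remaining ``clean'' piece $\frac{1}{1-\gamma}\mathbb{E}_{s\sim d_*}[\bar V_t(s)-V_t(s)]$ into Lemma~\ref{supp_lem2}, which converts it to $\frac{1}{1-\gamma}(V_{t+1}(s_0)-V_t(s_0))$ plus two more residues under $\bm{\pi}_t$ and $\bm{\pi}_{t+1}$. That accounts for the three copies of the approximation error and is exactly what the paper does.

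Where your plan diverges from the paper is in the last two mechanical steps, and your explanation of the coefficient $(1+\alpha v_{\texttt{max}})$ is not the paper's mechanism. First, the paper does \emph{not} use a second-order expansion for the one-step improvement; it simply applies a Lipschitz bound from \cite{xu2020improving}, $V_{t+1}(s_0)-V_t(s_0)\le \frac{2v_{\texttt{max}}}{1-\gamma}\lVert m_{t+1}-m_t\rVert_2$, and then the gradient-step identity $\lVert m_{t+1}-m_t\rVert_2=\alpha\lVert \bar{Q}_t\rVert_2$. Second, the factor $(1+\alpha v_{\texttt{max}})$ does \emph{not} come from a change of measure $\bm{\pi}_{t+1}\to\bm{\pi}_t$; the three residues are each bounded directly by $\lVert Q_t-\bar Q_t\rVert_2$ (policies are probability vectors, no $\ell_1$ comparison is needed), giving the constant $3$. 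The extra $\alpha v_{\texttt{max}}$ instead drops out of the triangle inequality $\lVert \bar Q_t\rVert_2\le \lVert Q_t\rVert_2+\lVert Q_t-\bar Q_t\rVert_2$ applied to the Lipschitz term: this spills $\frac{2\alpha v_{\texttt{max}}}{(1-\gamma)^2}\lVert Q_t-\bar Q_t\rVert_2$ onto the residue side, and $3+2\alpha v_{\texttt{max}}\le 3(1+\alpha v_{\texttt{max}})$. The leading term then follows from the entrywise bound on $\lVert Q_t\rVert_2$. Your proposed route would likely still close, but it is not the argument the paper runs.
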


\begin{proof}
    From Lemma \ref{supp_lem1}, we can say that,
    \begin{align*}
        V_*(s_0) - V_{t}(s_0) & = \frac{1}{1-\gamma} \mathbb{E}_{s \sim d_*} \sum_{\textbf{u} \in \mathcal{U}} \bm{\pi}^* A_{t}(s, \textbf{u})
        \\
        & = \frac{1}{1-\gamma} \mathbb{E}_{s \sim d_*} \sum_{\textbf{u} \in \mathcal{U}} \bm{\pi}^* Q_{t}(s, \textbf{u}) - \frac{1}{1-\gamma} \mathbb{E}_{s \sim d_*} V_t(s)
        \\
        & = \frac{1}{1-\gamma} \mathbb{E}_{s \sim d_*} \sum_{\textbf{u} \in \mathcal{U}} \bm{\pi}^* \bar{Q}_{t}(s, \textbf{u}) + \frac{1}{1-\gamma} \mathbb{E}_{s \sim d_*} \sum_{\textbf{u} \in \mathcal{U}} \bm{\pi}^* \left( Q_{t}(s, \textbf{u}) - \bar{Q}_{t}(s, \textbf{u}) \right) - \frac{1}{1-\gamma} \mathbb{E}_{s \sim d_*} V_t(s)
        \\
        & \leq \frac{1}{\alpha} \mathbb{E}_{s \sim d_*} \left( \frac{\alpha}{1-\gamma} \bar{V}_{t}(s, \textbf{u}) - \frac{\alpha}{1-\gamma} V_t(s) + \frac{\alpha}{1-\gamma} \sum_{\textbf{u} \in \mathcal{U}} \bm{\pi}_{t} \left| \bar{Q}_{t}(s, \textbf{u}) - Q_{t}(s, \textbf{u}) \right| \right)
        \\
        & \quad + \frac{1}{1-\gamma} \mathbb{E}_{s \sim d_*} \sum_{\textbf{u} \in \mathcal{U}} \bm{\pi}^* \left( Q_{t}(s, \textbf{u}) - \bar{Q}_{t}(s, \textbf{u}) \right)
        \\
        & \leq \frac{1}{1 - \gamma} \left( V_{t+1}(s_0) - V_{t}(s_0) \right)
        \\
        & \quad + \frac{1}{(1-\gamma)^2} \mathbb{E}_{s \sim d_*} \sum_{\textbf{u} \in \mathcal{U}} \bm{\pi}_{t+1} \left| Q_{t}(s, \textbf{u}) - \bar{Q}_{t}(s, \textbf{u}) \right| + \frac{1}{(1-\gamma)^2} \mathbb{E}_{s \sim d_*} \sum_{\textbf{u} \in \mathcal{U}} \bm{\pi}_{t} \left| Q_{t}(s, \textbf{u}) - \bar{Q}_{t}(s, \textbf{u}) \right|
        \\
        & \quad + \frac{1}{1-\gamma} \mathbb{E}_{s \sim d_*} \sum_{\textbf{u} \in \mathcal{U}} \bm{\pi}^* \left( Q_{t}(s, \textbf{u}) - \bar{Q}_{t}(s, \textbf{u}) \right) & \text{[from Lemma \ref{supp_lem2}]}
        \\
        & \leq \frac{2 v_{\texttt{max}}}{(1 - \gamma)^2} \left\lVert m_{t+1} - m_t \right\rVert_2 + \frac{3}{(1 - \gamma)^2} \left\lVert Q_t - \bar{Q}_t \right\rVert_2 
        \quad = \frac{2 \alpha v_{\texttt{max}}}{(1 - \gamma)^2} \left\lVert \bar{Q}_t \right\rVert_2 + \frac{3}{(1 - \gamma)^2} \left\lVert Q_t - \bar{Q}_t \right\rVert_2 & \text{[from \cite{xu2020improving}]}
        \\
        & \leq \frac{2 \alpha v_{\texttt{max}}}{(1 - \gamma)^3} \left\lVert Q_t \right\rVert_2 + \frac{3(1 + \alpha v_{\texttt{max}})}{(1 - \gamma)^2} \left\lVert Q_t - \bar{Q}_t \right\rVert_2
        \quad \leq \frac{2 \alpha v^2_{\texttt{max}} |\mathcal{S}| |\mathcal{U}|}{(1 - \gamma)^3} + \frac{3(1 + \alpha v_{\texttt{max}})}{(1 - \gamma)^2} \left\lVert Q_t - \bar{Q}_t \right\rVert_2
    \end{align*}
    This concludes our proof.
\end{proof}

The lemmas \ref{supp_lem2} and \ref{supp_lem3} proved above also hold true the barrier function that we had used for applying the loss due to the constraint. Owing to better convergence (proved by the authors in \cite{qin2021learning}) we have used the loss function based on the barrier function. For time complexity analysis, we assume a parallel convergence analysis for the constraint holds true, with our constraints scaling with the rewards to obey the limit. In general, TD error can also be used to learn the constraints here.

Let $T$ be the total number of training steps. Let us assume, with probability at least $1 - \delta/T$ we have, $\left\lVert Q_t - \bar{Q}_t \right\rVert_2 \leq \sqrt{(1 - \gamma) |\mathcal{S}| |\mathcal{U}|} / \sqrt{T}$. Given that, with probability $1 - \delta$ we have $\sum_T \left\lVert Q_t - \bar{Q}_t \right\rVert_2 \leq \sqrt{(1 - \gamma) |\mathcal{S}| |\mathcal{U}| T}$, we have the following with same confidence.

\begin{equation*}
    \alpha \sum_T \left( (V_*(s_0) - V_{t}(s_0)) + (V^B_*(s_0) - V^B_{t}(s_0)) \right) \leq \frac{2 \alpha^2 v^2_{\texttt{max}} |\mathcal{S}| |\mathcal{U}| T}{(1 - \gamma)^3} + \frac{3 \alpha (1 + \alpha v_{\texttt{max}}) \sqrt{|\mathcal{S}| |\mathcal{U}| T}}{(1 - \gamma)^{1.5}}
\end{equation*}

Let $\alpha = (1 - \gamma)^{1.5} / \sqrt{|\mathcal{S}| |\mathcal{U}| T}$. If we consider gradient update due to the reward and gradient update due to the constraint to be separate steps, from Lemma 9 in \cite{xu2021crpo} we know that reward optimization runs for more than half of the time-steps. Therefore,

\begin{equation*}
    V_*(s_0) - V_{T}(s_0) \leq \frac{2}{\alpha T} \left( \frac{2 \alpha^2 v^2_{\texttt{max}} |\mathcal{S}| |\mathcal{U}| T}{(1 - \gamma)^3} + \frac{3 \alpha (1 + \alpha v_{\texttt{max}}) \sqrt{|\mathcal{S}| |\mathcal{U}| T}}{(1 - \gamma)^{1.5}} \right)
    \leq \frac{|\mathcal{S}| |\mathcal{U}|}{(1 - \gamma)^{1.5} \sqrt{T}} (4 v^2_{\texttt{max}} + 6 + 6 v_{\texttt{max}})
\end{equation*}

Using the same equation, we can obtain the bound on the constraints; which also has the same bound.



\begin{figure*}[!ht]
    \centering
    \includegraphics[width=0.9\linewidth]{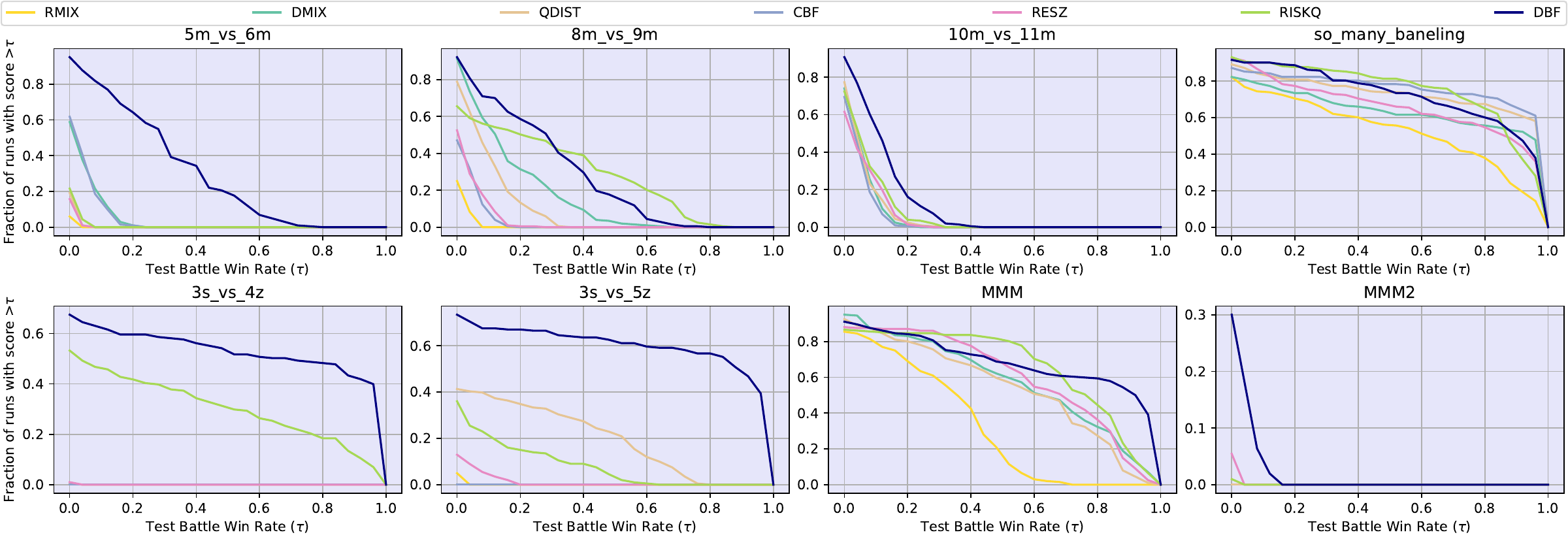}
    \caption{Analysis of win rate on the StarCraft hard and super-hard scenarios.}
    \label{fig_supp1}
\end{figure*}

\begin{figure*}[!ht]
    \centering
    \includegraphics[width=0.9\linewidth]{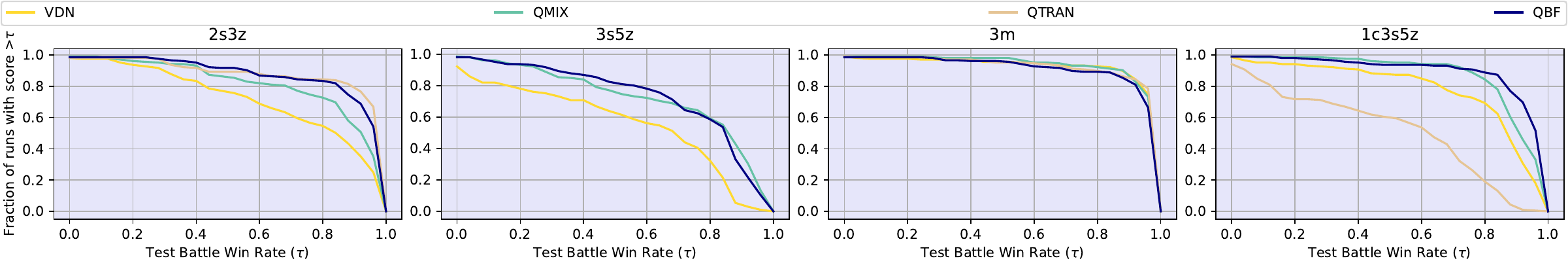}
    \caption{Analysis of win rate on the StarCraft easy scenarios.}
    \label{fig_supp2}
\end{figure*}

\section{Additional Experimental Results}

We performed additional analysis on the evaluations done using the StartCraft environments. Here the fraction of trajectories with test battle win rate above a certain fraction as indicated in the horizontal axis, highlighting its convergence. The results are shown in figures \ref{fig_supp1}, \ref{fig_supp2}.

\end{document}